\documentclass{article} 
\usepackage{iclr2026_conference,times}
\usepackage{graphicx}
\usepackage{algorithm}
\usepackage{algorithmic}

\usepackage{amsmath,amsfonts,bm}

\def\eqref#1{equation~\ref{#1}}

\def\1{\bm{1}}

\DeclareMathAlphabet{\mathsfit}{\encodingdefault}{\sfdefault}{m}{sl}
\SetMathAlphabet{\mathsfit}{bold}{\encodingdefault}{\sfdefault}{bx}{n}

\DeclareMathOperator*{\argmax}{arg\,max}

\usepackage{caption}
\usepackage{graphicx}
\usepackage{caption}
\usepackage{booktabs}
\usepackage{multirow}
\usepackage{xcolor}
\usepackage{hyperref}
\usepackage{url}
\usepackage{amsthm}
\iclrfinalcopy
\newtheorem{proposition}{Proposition}[section]

\title{A Nash Equilibrium Framework for Training-Free Multimodal Step Verification}

\author{\textsuperscript{*}Rohit Sinha$^{1,2}$, \textsuperscript{*}Kunal Tilaganji$^{1,2}$, Tanuja Ganu$^{1}$, Nagarajan Natarajan$^{1}$, \And Amit Sharma$^{1}$, Vineeth N. Balasubramanian$^{1,2}$ \\
$^{1}$Microsoft Research India \quad $^{2}$Indian Institute of Technology Hyderabad \\
}

\begin{document}

\maketitle

\begin{abstract}

Multimodal large language models often generate reasoning chains containing subtle errors that lead to incorrect answers. Current verification approaches have notable limitations. Learned critics need extensive labeled data and show inconsistent performance across different tasks. Meanwhile, existing training-free methods simply average scores from different sources, missing a key insight: when these scores disagree, that disagreement itself carries important information about whether a reasoning step is truly valid or not. We propose a training-free verification approach that treats step-wise verification as a coordination problem among specialized judges. We formalize these judges' interaction as a Nash equilibrium game where agreement signals valid steps while disagreement reveals instability. Our method computes equilibrium scores through a closed-form solution, enabling both disagreement-aware filtering and stability-conscious ranking of reasoning steps. Evaluated across six benchmarks, our approach achieves consistent improvements of 2.4\% to 5.2\% over baseline models and shows competitive performance against learned critics, demonstrating that cross-modal agreement (not just average confidence) provides robust verification signals without task-specific adaptation.
\end{abstract}
\vspace{-5pt}

\section{Introduction}

Multimodal large language models (MLLMs) demonstrate impressive long-horizon reasoning over images and text~\cite{bai2025qwen3vltechnicalreport, NEURIPS2023_6dcf277e}, yet their reasoning chains often contain subtle errors, unsupported visual claims, or logical gaps that propagate to incorrect answers~\cite{li2023evaluating,liu2024hallucination}.

Process reward models address this by scoring individual steps rather than only final answers \citep{lightman2023prm800k,wang2023mathshepherd,luo2024omegaprm}. In multimodal settings, supervision pipelines and learned critics judge step correctness given visual context \citep{sun2025mmverify,wang2024visualprm}. These methods show that intermediate reasoning step verification lead to more reliable infernce.  However, a key bottleneck is that step-level verifications require carefully labeled training data, often obtained through Monte Carlo rollouts or human annotations \citep{luo2024omegaprm,wang2024visualprm} and empirical evidence also reveals tension in MLLM capabilities: grounding disrupts reasoning~\cite{kang2024vgent,zhang2023llavagrounding}, visual tuning degrades language performance~\cite{shen2024multimodal}, extended reasoning causes forgetting~\cite{chen2025morethought}, motivating a game-theoretic framework for step verification. 

Consider a setting where a visual verifier assigns 0.9 confidence to a step mentioning present objects, while a logical verifier assigns 0.2 due to incoherence. Their average (0.55) masks fundamental disagreement making the step appear "moderately acceptable". Standard aggregation~\cite{weaver,helpingherdingrewardmodel} treats all combinations equally, unanimous confidence (0.9, 0.9, 0.9) and conflicting evidence (0.9, 0.2, 0.9) yield similar averages. When independent judges disagree; this signals instability, and that step should not extend the reasoning chain.

This insight leads us to treat verification as a coordination problem between independent sources of evidence. Incorrect steps cause judge disagreement while correct steps yield convergence. Unlike variance (ignores belief strength) or entropy (treats (0.5, 0.5, 0.5) like (1.0, 0.0, 0.5)), the equilibrium framework captures how judges revise their beliefs in light of what others think, producing scores that balance collective confidence against remaining points of disagreement.

We formalize this as a training-free game among frozen, modality-specialized judges via Nash equilibrium~\citep{nash1950equilibrium}. This plug-in verifier requires no training or annotations, complementing or replacing learned critics. Accepted steps are those achieving stable consensus. Across six benchmarks spanning spatial reasoning and visual grounding, we achieve consistent improvements over learned critics and baseline models, demonstrating that disagreement structure provides robust, transferable verification signals.

\textbf{Our Contribution:} (i) We frame step-wise verification as coordination among multiple independent, frozen verifiers, establishing disagreement structure as a verification signal, and propose disagreement-aware verification as a design principle, rather than learning to predict step correctness (ii) We introduce a Nash equilibrium verifier with closed-form solution modeling cross-modal agreement. (iii) We provide a training-free, plug-in implementation integrating directly into MLLM reasoning without base model modification. To the best of our knowledge, this is a first effort towards inline verification in multimodal settings with multiple verifiers.

\vspace{-10pt}

\section{Related Work}
\label{sec:related_work}
\vspace{-10pt}
Modern vision-language models increasingly rely on explicit reasoning steps to handle compositional tasks, with benchmarks like MathVista, MMMU, and MathVision showing that structured chains of inference outperform direct prediction \citep{lu2023mathvista,yue2024mmmu} ~\cite{wang2024measuringmultimodalmathematicalreasoning}. Multimodal chain-of-thought prompting further shows how exposing intermediate steps improves both interpretability and accuracy \citep{zhang2024multimodal}. While these approaches excel at generating reasoning traces, evaluating their correctness remains challenging. Process reward models (PRMs) address this by scoring intermediate reasoning states rather than just final answers, improving sample efficiency and stability \citep{cobbe2021training} ~\cite{lightman2023letsverifystepstep}. Recent multimodal extensions \cite{mmprm,visualprm} adapt this paradigm using synthetic rollouts and large-scale training to assess step correctness in visual contexts \citep{luo2024omegaprm}. Separately, learned critic models \cite{sherlock,criticv} judge reasoning quality by detecting hallucinations and checking grounding, while \citep{sun2025mmverify} shows that frozen large models can verify chain-of-thought reasoning post hoc without retraining. However, these approaches typically produce a singular scalar score that implicitly aggregate heterogeneous evidence without explicitly modeling disagreement structure, though ensemble learning and multi-agent reasoning literature suggest disagreement itself can be highly informative \citep{NIPS2017_9ef2ed4b,du2023debate,nash1950equilibrium}. Our equilibrium based verifier builds on these insights by framing step-wise verification as a coordination problem among frozen, modality-specialized judges. Instead of learning to predict step correctness, it asks whether independent evaluators can reach a stable agreement given their respective evidence. This formulation emphasizes stability and coherence of reasoning steps rather than confidence alone.

\section{Nash-Equilibrium–Based Step-wise Verification}
\label{sec:nash_verifier}

\textbf{Verification Setting:} At a given reasoning step \(t\), the base model is provided with an image \(I\), a question \(Q\), and a partial reasoning trace \(r_{1:t-1}\). A base MLLM generates \(n\) candidate step \(r_t\), which must be evaluated before extending the reasoning chain. The verifier returns a binary decision: accept the step and continue reasoning, or reject it and resample. The evaluation is local to the step under consideration and does not require access to future reasoning or the final answer. This locality allows errors to be intercepted at the point where they arise, rather than after they have propagated.

\textbf{Verifier Agents: } Verification is carried out by a small set of frozen MLLMs, each prompted to judge the step from a distinct perspective. In the experiments presented later, three agents are used:
(i) \textbf{Visual Agent (V)}: a visual verifier that assesses whether the step is supported by the image and is visually verifiable,
(ii) \textbf{Logical Agent (L)}: a logical verifier that evaluates whether the reasoning step follows logically from previous steps and progresses toward answering the question, and
(iii) \textbf{Contextual Agent (C)}: a contextual  verifier that assesses whether the step maintains focus on the original question and avoids introducing irrelevant information.

Each agent is prompted to output a single scalar score $\hat{s}_i \in [0,1]$, interpreted as its subjective confidence that the reasoning step is valid, given its modality-specific evidence. The prompts enforce a fixed output format. Agents operate in complete isolation. They never see other agents' scores, and their prompts contain no information about other agents' judgments. No fine-tuning or calibration is performed.

\textbf{Agreement Game:} Rather than aggregating the raw scores directly, verification is framed as a coordination problem among the agents. The key intuition is, if a reasoning step is truly valid, independent judges evaluating this step from a different perspectives should be able to reach agreement about it. If they cannot agree even after accounting for each other's perspectives, the step is likely unstable. We model this as a game where each agent chooses to softly adjust its initial belief toward the group consensus, but only to the extent that doing so does not overly contradict its own evidence. 
The Nash equilibrium of this game tells us the final scores after all agents have implicitly accounted for the possibility that others might have valid reasons for disagreeing.

Each agent selects a reported score \(s_i \in [0,1]\), balancing agreement with others against fidelity to its own judgment. The interaction is modeled as a quadratic game in which the payoff to agent \(i\) is
$u_i(s_i, s_{-i})
=
- (s_i - \bar{s}_{-i})^2
- \lambda_i (s_i - \hat{s}_i)^2,$
where \(\bar{s}_{-i}\) denotes the mean reported score of the remaining agents and \(\lambda_i > 0\) controls agent \(i\)'s strength of self-consistency. The \(\lambda_i\) values encode how resistant each agent should be to consensus pressure. A higher \(\lambda_i\) means agent \(i\) trusts its own modality-specific evidence more strongly and will deviate less from its raw score even when others disagree. This formulation encourages consensus while penalizing excessive deviation from the agent’s original belief. The game is strictly concave in each player’s strategy and admits a unique Nash equilibrium (see Appendix~\ref{app:proof} for proof).

\textbf{Equilibrium Computation: } At equilibrium, each agent’s reported score satisfies
$ s_i^* = \frac{\bar{s}_{-i}^* + \lambda \hat{s}_i}{1 + \lambda}.$
The system of equations has a closed-form solution and can be computed directly from the raw scores \(\{\hat{s}_i\}\). No iterative solver or learning-based optimization is required, and equilibrium computation adds negligible overhead relative to the verifier queries themselves.

A critical property of the equilibrium is that it preserves the mean while dampening disagreement. The mean equilibrium score equals the mean raw score, but individual scores converge as disagreement diminishes proportionally to inter-agent conflict. This matters because it allows us to separate two failure modes: (1) steps with low average confidence (collective doubt), and (2) steps with high average confidence but high dispersion (conflicting evidence). A simple average would accept both (0.6, 0.6, 0.6) and (0.9, 0.3, 0.6) equally, but equilibrium dispersion reveals that the second case reflects unresolved instability. The complete mathematical formulation and implementation details are provided in Appendix~\ref{app:implementation}.

The equilibrium computation adds negligible overhead. Verification cost is dominated by model inference. Empirically, our method requires $3.80$ the wall-clock time of the base model, closely matching the theoretical prediction of $3.27$ Appendix \ref{app:comp_complexity} has detailed complexity analysis.

\textbf{Acceptance Criterion: } Once equilibrium scores \(\{s_i^*\}\) are obtained, two summary statistics are computed:
the mean confidence $\bar{s}^* = \frac{1}{N}\sum_i s_i^*,$
and the dispersion $\Delta^* = \frac{1}{N}\sum_i |s_i^* - \bar{s}^*|.$

A reasoning step is accepted if the mean confidence exceeds a fixed threshold \(\tau\) and the dispersion is below a tolerance \(\epsilon\). This dual criterion enables two complementary verification mechanisms. The dispersion check $(\Delta < \epsilon)$ filters candidates with conflicting cross-modal evidence, while the confidence check $(\bar{s}^* > \tau)$ ensures
sufficient collective endorsement. Among accepted candidates, ranking by$ \bar{s}^*$ selects the most stable continuation. When no candidates satisfy both criteria, continuous ranking $(\bar{s}^* - \Delta)$ provides fallback.

\vspace{-10pt}
\subsection{Experimental Setup}
\vspace{-10pt}
We evaluate our verification framework on six benchmarks that collectively assess different aspects of visual and multimodal reasoning. These include 3DSRBench \cite{3dsrbench}, CV-Bench-2D  \cite{cvbench}, AI2D \cite{ai2d} MMStar \cite{mmstar}, BLINK \cite{blink} for spatial understanding, perception-heavy visual question answering, and general multimodal reasoning capabilities. Our approach employs three specialized verification agents: Visual (V), Logical (L), and Contextual (C), built on Qwen2.5-VL-7B-Instruct, to collaboratively verify reasoning steps generated by Qwen2.5-VL-7B-Instruct. Each agent operates with distinct prompts and scoring criteria. Detailed experimental setup is presented in Appendix \ref{app:experimental_setup}, and equilibrium computation details are in Appendix \ref{app:implementation}

At each reasoning step, the base model generates three candidate continuations via temperature sampling ($T=0.8$, top-$p=0.6$). Each agent provides raw scores $\hat{s}_i \in [0,1]$ for every candidate, which we convert to equilibrium scores $s_i^*$ using agent-specific stubbornness parameters ($\lambda_V = 1.5$, $\lambda_L = 1.0$, $\lambda_C = 0.8$). A candidate step is accepted if its equilibrium dispersion $\Delta < 0.1$ and mean score $\bar{s}^* > 0.6$. Among accepted steps, we select the one with the highest $\bar{s}^*$. See Appendix~\ref{app:experimental_setup} for detailed agent prompts.

\section{Main Results}
\begin{table}[!h]
\centering

\begin{tabular}{lcccccc}
\hline
\textbf{Dataset} 
& \textbf{3DSRBench} 
& \textbf{CV-Bench-3D} 
& \textbf{CV-Bench-2D} 
& \textbf{BLINK} 
& \textbf{MMStar} 
& \textbf{AI2D} \\
\hline
Base 
& 56.12 
& 76.39 
& 49.27 
& 42.86 
& 57.25 
& 76.52 \\
LLaVA Critic 
& 52.71 
& 81.58 
& 67.52 
& 45.50 
& 61.07 
& 76.61 \\
Sherlock 
& 11.50 
& 28.67 
& 68.78 
& 39.61 
& 54.20 
& 61.54 \\
VisionSR1 
& 35.50 
& 34.28
& 53.82 
& 30.09 
& 57.20 
& 60.32 \\
\textbf{Nash (Ours)}
& \textbf{59.02} 
& \textbf{82.34} 
& \textbf{71.51} 
& \textbf{46.15} 
& \textbf{63.21} 
& \textbf{78.95} \\
\hline
\end{tabular}
\caption{Accuracy (\%) across multimodal reasoning benchmarks under different step-wise verification strategies}

\label{tab:main_results}
\end{table}

\vspace{-15pt}
 Table~\ref{tab:main_results} compares our approach against the base model and verification baselines, where the key takeaway is the consistent performance gains of our verifier across all benchmarks, demonstrating robust improvements across diverse reasoning settings.

\textbf{Finding 1: Consistent gains across diverse reasoning tasks and on hallucination-prone tasks.} 
Nash verifier shows positive results on all six benchmarks (+2.4\% to +22.2\%) without task-specific adaptation. Gains span spatial reasoning (3DSRBench: +2.9\%, CV-Bench-3D: +6.0\%), visual grounding (CV-Bench-2D: +22.2\%, BLINK: +3.3\%), and abstract reasoning (MMStar: +6.0\%, AI2D: +2.4\%), suggesting equilibrium verification may capture cross-modal agreement patterns. Larger gains observed on CV-Bench-2D (+22.2\%), known for fluent but weakly grounded reasoning. BLINK shows +3.3\%. Equilibrium dispersion appears to detect steps where linguistic confidence may lack visual/contextual support, potentially reducing error propagation from plausible-sounding but unstable steps.

\textbf{Finding 2: Training-free verification shows competitive results.}
Nash verification achieves comparable performance without training data. On CV-Bench-2D (+22.2\%), it performs favorably against LLaVA Critic (+18.3\%) and Sherlock (+19.5\%). Unlike learned critics showing variability (LLaVA degrades 3DSRBench -3.4\%), equilibrium verification suggests reliance on structural cross-modal agreement properties. \textbf{Learned critics show task-specific variability.}
VisionSR1 and Sherlock exhibit substantial variation: VisionSR1 degrades 3DSRBench (-20.6\%), AI2D (-16.2\%); Sherlock declines on 3DSRBench (-44.6\%), CV-Bench-3D (-47.7\%), AI2D (-15.0\%). Nash verification maintains positive results across all tasks, avoiding degradation below baseline on tested benchmarks.

\textbf{Finding 3: Results on high-performing base models.}
Nash verification adds +2.4\% on AI2D despite 76.5\% base accuracy, which suggests the proposed method may function as a stability regularizer, encouraging reasoning trajectory coherence rather than only detecting obvious errors.

\textbf{Finding 4: Spatial reasoning reveals critic sensitivity.}
On 3DRSRBench, equilibrium verification improves accuracy from 56.12\% to 59.02\%, while LLaVA Critic and VisionSR1 perform below baseline, suggesting learned critics may be sensitive to spatial reasoning patterns where cues conflict. Our approach treats disagreement as information and addresses conflicting perspectives through equilibrium.

\vspace{-10pt}




\section{Conclusion and Takeaways}
\vspace{-10pt}

Multimodal reasoning often fails gradually rather than catastrophically. Individual steps appear locally plausible creating trajectories that drift toward incorrect answers. Our experiments reveal that the disagreement structure among specialized verifiers exposes this instability more reliably than confidence scores alone, providing an early warning signals for unstable reasoning steps. By treating verification as coordination rather than classification, our training-free approach achieves consistent improvements across diverse benchmarks without task-specific tuning. The method's effectiveness depends fundamentally on maintaining diverse agent perspective that reveals conflicting evidence before errors propagate. Limitations include reliance on frozen model quality. Future work should explore dynamic agent selection and adaptive equilibrium parameters.


\bibliography{references}
\bibliographystyle{iclr2026_conference}

\newpage
\section{Appendix: Table of Contents}
\appendix


\begin{itemize}
    \item \textbf{A. HyperParameter}
    \item \textbf{B. Ablation}
    \item \textbf{C. Computational Complexity Analysis]}
    \item \textbf{D. Detailed Experimental Setup}
    \item \textbf{E. Implementation Details: Nash-Equilibrium Computation}
    \item \textbf{F. Nash-Equilibrium Existence and Uniqueness}
    \item \textbf{G. Qualitative Samples}
    \item \textbf{H. Prompt Templates}
    
\end{itemize}


\section{Hyperparameters}

\begin{table}[!h]
\centering
\begin{tabular}{l c}
\hline
Hyperparameter & Value \\
\hline
$\lambda_{\text{V}}$        & $1.5$ \\
$\lambda_{\text{L}}$    & $1.0$ \\
$\lambda_{\text{C}}$    & $0.8$ \\
$n$                     & 5 \\
$\tau$                  & 0.6\\
$\epsilon$              &  0.1 \\
\hline
\end{tabular}
\caption{Hyperparameters used in all experiments.}
\label{tab:lambda_values}
\end{table}

In our experiments, we used hyperparameters as shown in Table \ref{tab:lambda_values}. This configuration reflects the intuition that the visual verifier (V) should be most resistant to consensus pressure on perception-heavy steps, while the contextual verifier (C) may be more flexible when visual or linguistic evidence is strong. These values are fixed across all datasets and require no tuning.
\section{Abalation}
We perform ablation studies for $\tau$ and $\epsilon$ sensitivty  on 50 randomly sampled instances, which efficiently isolates component contributions while maintaining representative coverage of task diversity. Though full-dataset analysis could further refine these insights, the observed trends provide clear evidence of each component's role in verification performance.

\subsection{Threshold Sensitivity and Equilibrium as Continuous Ranking}
\label{ab:tau_sweep}
\begin{table}[h]
\centering
\small
\begin{tabular}{lcccccl}
\hline
$\tau$ & Accuracy   & Accept & Mean $\bar{s}^*$ & Mean $\Delta$ & Selection Mode \\
\hline
0.0001 & 68.63\% & 85.1\% & 0.971 & 0.009 & Normal (89\%) \\
0.001  & 66.67\%  & 87.4\% & 0.974 & 0.008 & Normal (90\%) \\
0.01   & 64.71\%   & 86.3\% & 0.970 & 0.009 & Normal (90\%) \\
0.1    & 60.78\%  & 83.2\% & 0.978 & 0.007 & Normal (89\%) \\
0.6    & 60.78\%    & 84.4\% & 0.968 & 0.010 & Normal (89\%) \\
1.0    & 60.78\%    & 0.0\%  & ---   & ---   & Fallback (100\%) \\
10.0   & 70.59\%   & 0.0\%  & ---   & ---   & Fallback (100\%) \\
\hline
\end{tabular}
\vspace{-2mm}
\caption{Confidence threshold ablation on 3DSRBench with dispersion tolerance 
fixed at $\epsilon = 0.1$. Base model accuracy: 56.12\%. Refer to Algorithm \ref{alg:nash_verification} for the selection modes in the proposed method.}
\label{tab:tau-sweep-abtable}
\end{table}

\begin{figure}[h]
    \centering
    \includegraphics[width=0.9\linewidth]{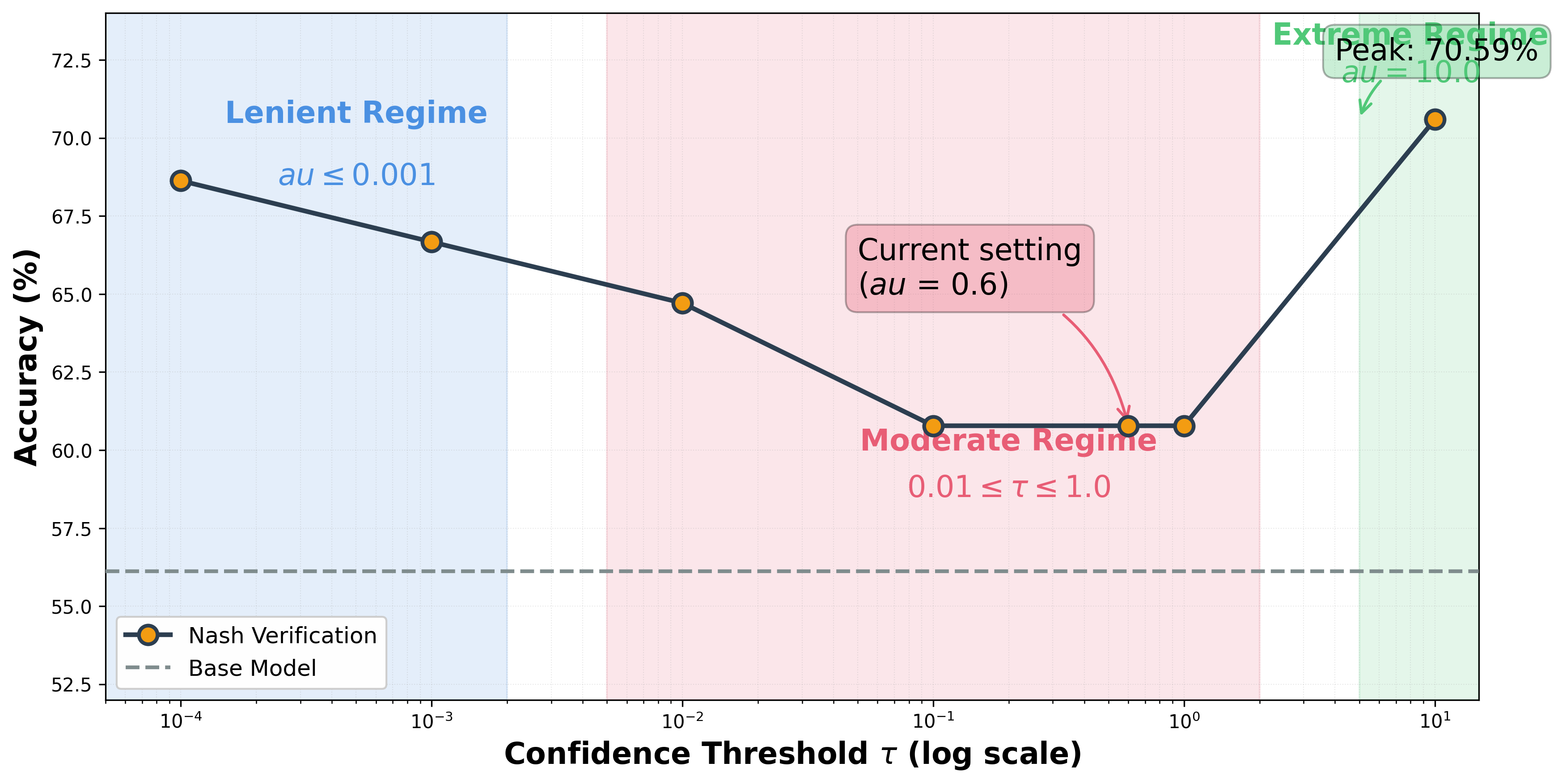}
    \caption{Threshold sensitivity on 3DSRBench with $\epsilon = 0.1$ fixed. Performance exhibits a U-shaped curve with highest accuracy at $\tau = 10.0$ (70.6\%), where the acceptance criterion is never satisfied and selection relies entirely on continuous equilibrium ranking $\text{argmax}(\bar{s}^* - \Delta)$. Intermediate thresholds ($\tau$ = 0.1-0.6) show the lowest performance by filtering out borderline cases with informative disagreement patterns. The dashed line indicates base model accuracy (56.12\%). The curve's shape reveals three distinct operating regimes: permissive ranking (left), selective filtering (bottom), and continuous stability ranking (right).}
    \label{fig:ushapetau}
\end{figure}

To understand how the acceptance criterion affects verification quality, we ablated the confidence threshold $\tau$ while holding the dispersion tolerance fixed at $\epsilon = 0.1 $
on 3DSRBench. We tested $\tau \in \{10^{-4}, 10^{-3}, 10^{-2}, 10^{-1}, 0.6, 1.0, 10.0\}$, spanning permissive to restrictive settings.

Table \ref{tab:tau-sweep-abtable} and Figure \ref{fig:ushapetau} shows a surprising U-shaped performance curve. Very permissive thresholds $(\tau \leq 0.01)$ achieves 65-69\% accuracy by accepting 85-87\% of candidate steps and selecting among them based on equilibrium scores. Intermediate values ($\tau$ = 0.1 0.6)  show lower performance at around 61\%, despite similar acceptance rates of 83-86\%.  Most unexpectedly, $\tau$ = 10.0 achieves the highest accuracy at 70.6\%. Yet this setting accepts zero candidates, as no equilibrium mean score exceeds 10.0.

This U-shaped curve reveals a fundamental insight about how equilibrium scores 
encode verification information. The curve's shape reflects three distinct operating 
regimes, each telling us something different about the nature of cross-modal agreement.

\textbf{Left side of the U $(\tau \leq 0.01)$: Permissive regime.} Here the threshold is so low that nearly all candidates pass, and performance depends primarily on how well the equilibrium scores rank accepted candidates. The strong performance (65-69\%) indicates that even among mostly accepted candidates, the relative ordering by $\bar{s}^*$ captures meaningful quality differences.

\textbf{Bottom of the U $(\tau = [0.1 ,0.6])$: Selective filtering regime.} This is where the U-shape becomes particularly revealing. These thresholds accept 83-84\% of candidates, slightly more selective than the permissive regime, yet performance drops by 4-8 percentage points. What's being filtered out? The borderline cases where judges partially disagree but haven't reached a strong consensus. For spatial reasoning, these are exactly the cases where cross-modal conflict is most informative: when visual evidence suggests one answer but linguistic priors or logical consistency pull toward another. 

\textbf{Right side of the U $(\tau \geq 1.0)$: Continuous ranking regime} When $\tau$ = 10.0 rejects all candidates (since equilibrium scores are bounded in [0,1]), the system necessarily falls back to selecting $\text{argmax}(\bar{s}^* - \Delta)$. This achieves the highest performance (70.6\%) because it preserves the full equilibrium information. Both collective confidence ($\bar{s}^*$) and disagreement structure ($\Delta$) contribute to every decision. In this case, continuous ranking via $\text{argmax}(\bar{s}^* - \Delta)$, preserves full equilibrium information about both confidence and disagreement structure.

The U-shaped pattern in threshold sensitivity tells us something important about how Nash equilibrium scores actually work best. These scores function naturally as a ranking tool, not as a binary classifier. That's why both extremes of the threshold spectrum perform well: they both respect the underlying ranking structure. At the permissive end (low $\tau$), we're essentially ranking candidates by their equilibrium-adjusted confidence score $\bar{s}^*$ alone. At the restrictive end (high $\tau$), we rank by $\text{argmax}(\bar{s}^* - \Delta)$, where disagreement acts as a tiebreaker that penalizes borderline cases. Both approaches succeed because they use the scores to compare and order candidates. The middle ground fails precisely because it tries to do something different. It converts these ranking-optimized scores into binary accept/reject decisions. Here, disagreement becomes grounds for outright rejection rather than information that helps us choose between options. This matters especially for spatial reasoning tasks, where reconciling naturally creates disagreement among specialized evaluators (V,L,C). That disagreement carries useful signal about relative quality when we're ranking steps against each other, but it becomes a crude filter when we're just trying to classify steps as correct or incorrect. The curve validates two insights that work together: (i) first, equilibrium coordination outperforms simple score averaging even when applied permissively across the board. (ii) Second, incorporating disagreement structure $(\bar{s}^* - \Delta)$ provides additional discriminative power when used universally for ranking.

\subsection{Epsilon Sensitivity Analysis}

We investigate how the dispersion tolerance $\epsilon$ affects verification quality on 3DSRBench spatialreasoning tasks. Epsilon controls when the Nash equilibrium iteration considers judges to have reached sufficient agreement. We evaluate seven epsilon values: $\epsilon \in  \{0.001, 0.05, 0.1, 0.5, 1.0, 2.5, 3.0\}$, spanning from very strict (requiring near-perfect agreement) to very permissive (accepting substantial residual disagreement). Keeping $\tau = 0.6$ fixed 

Table \ref{tab:epsilon-sensitivity} and Figure \ref{fig:epsabs} present accuracy across epsilon configurations.
\begin{table}[h]
\centering
\begin{tabular}{lcccccc}
\hline
$\varepsilon$ & Accuracy & Accept Rate & Mean $\bar{s}^*$ & Mean $\Delta$ \\
\hline
0.001 & 54.75\% & 66.29\% & 0.9227 & 0.0253  \\
0.05  & 56.71\% & 84.19\% & 0.8869 & 0.0261\\
0.1   & 60.63\% & 84.07\% & 0.8811 & 0.0293 \\
0.5   & 61.00\% & 86.91\% & 0.8820 & 0.0285\\
1.0   & 60.63\% & 92.28\% & 0.9224 & 0.0235\\
2.5   & 58.17\% & 89.18\% & 0.8950 & 0.0235\\
3.0   & 57.67\% & 91.69\% & 0.9257 & 0.0221\\
\hline
\end{tabular}
\caption{Epsilon sensitivity analysis on 3DSRBench. Accept Rate indicates the percentage of candidate steps satisfying both the dispersion criterion and confidence threshold. Mean $\bar{s}^*$ and Mean $\Delta$ represent equilibrium statistics averaged across all candidate steps evaluated during verification.}
\label{tab:epsilon-sensitivity}
\end{table}

\begin{figure}[h]
    \centering
    \includegraphics[width=0.9\linewidth]{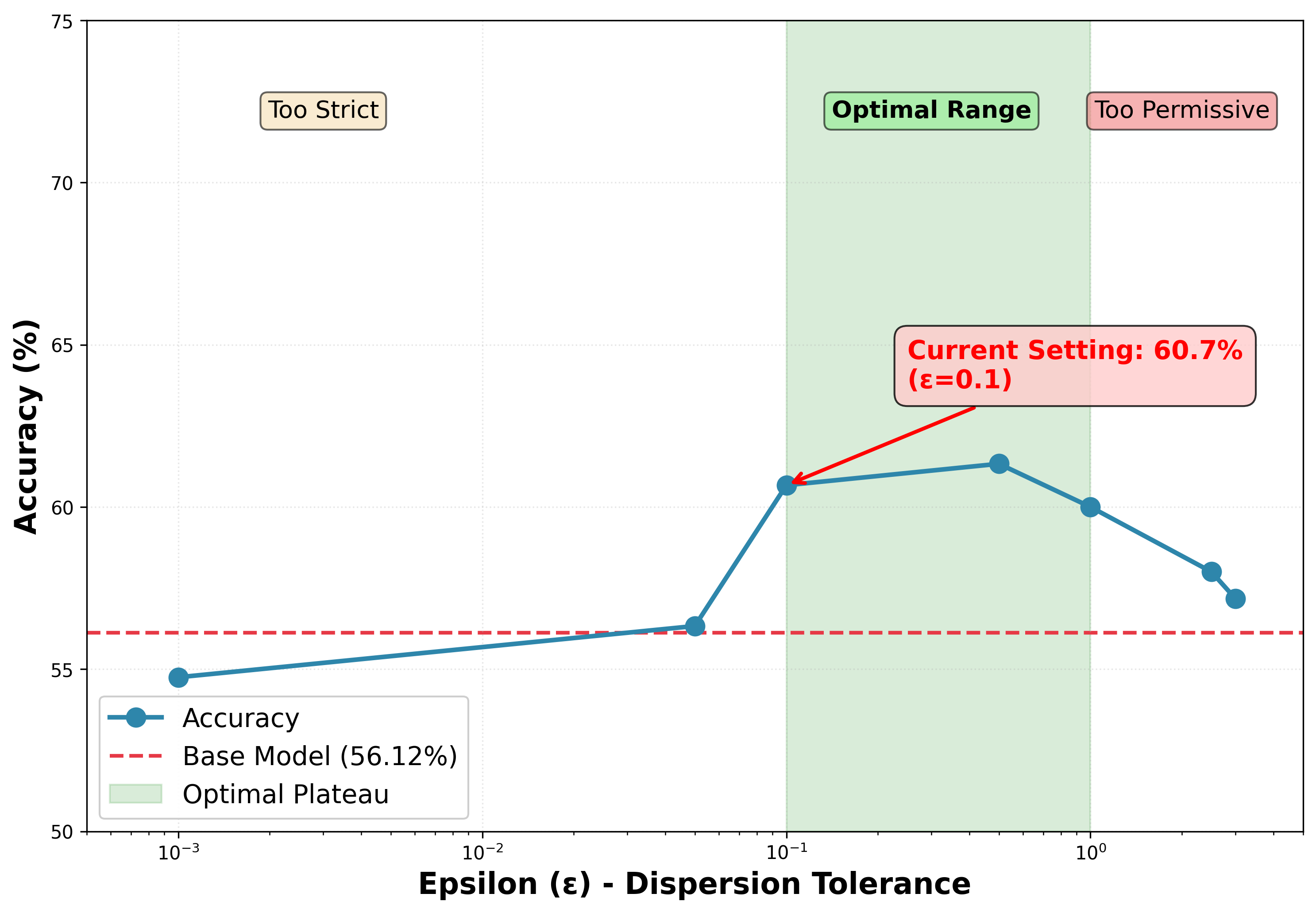}
    \caption{Epsilon sensitivity on 3DSRBench. The plot shows accuracy versus dispersion tolerance $\epsilon$ }
    \label{fig:epsabs}
\end{figure}

\textbf{Optimal plateau region:} Performance peaks in the range $\epsilon \in [0.1, 1.0]$, with accuracy hovering around 60-61\%. This plateau represents the "sweet spot" where dispersion filtering provides meaningful discrimination, rejecting unstable candidates while accepting those where judges have genuinely converged without being overly restrictive.

\textbf{Overly strict regime ($\epsilon < 0.1$):} Very low epsilon values (0.001, 0.05) show 
degraded performance (54-56\%). These settings demand near-unanimous agreement, which may be unrealistic for spatial reasoning tasks. When $\epsilon$ is too strict, many legitimate reasoning steps get rejected for having $\Delta $ slightly above the threshold, even though the underlying disagreement has been substantially resolved through equilibrium adjustment. Performance doesn't collapse entirely because the system falls back to ranking mode (selecting $\text{argmax}(\bar{s}^* - \Delta)$ when no candidates pass acceptance). This fallback mechanism, as discussed in Section \ref{ab:tau_sweep}, proves effective, which explains why even $\epsilon = 0.001$ maintains reasonable performance.

\textbf{Overly permissive regime ($\epsilon  > 1.0$)}: As epsilon increases beyond 1.0, we observe gradual performance degradation (60.63\% at $\epsilon = 1.0$, declining toward 57.67-58.17\% at $\epsilon = [2.5,3.0]$). This pattern aligns with theoretical expectations. When $\epsilon$ is too large, the dispersion filter loses discriminative power, accepting candidates where judges haven't truly reached a stable consensus. The decline happens gradually rather than all at once because candidates must pass two tests simultaneously: $\Delta < \epsilon$ \textbf{and} $\bar{s}^* > \tau$. Even when we relax the dispersion requirement, weak candidates are still caught by the confidence threshold of $\tau = 0.6$. The problem is more subtle: with looser dispersion filtering, the system loses its ability to tell the difference between ``strong agreement with high confidence'' ($\bar{s}^* = 0.9$, $\Delta = 0.05$) and ``false consensus with hidden disagreement'' ($\bar{s}^* = 0.9$, $\Delta = 0.8$). This allows problematic steps to pass through undetected.

\textbf{Relationship to Nash equilibrium theory}: This pattern validates a core intuition of our formulation. The Nash equilibrium doesn't just average scores, it represents a stable state where each judge, knowing what others believe, has no incentive to deviate from their reported score. 

When $\epsilon$ is appropriately calibrated (0.1-1.0), we're effectively asking: "Have the judges converged to a configuration where everyone is sufficiently satisfied given what others think?" This captures genuine cross-modal agreement. When $\epsilon$ is too strict, we reject steps where substantial agreement has been reached but perfect unanimity hasn't. When$\epsilon$ is too permissive, we accept steps where judges are still in meaningful disagreement, indicating the reasoning step may be unstable.



\subsection{Ablation Study: Rejection vs Selection Mechanisms} 

To understand whether our improvements come mainly from filtering out unreliable candidates or from better ranking among viable options, we break down our method into five distinct strategies. \textbf{Full Nash} represents our complete approach: we first filter candidates using the acceptance criterion ($\Delta < \epsilon \land \bar{s}^* > \tau$), then select the option with the highest confidence via $\argmax(\bar{s}^*)$. \textbf{No Rejection} removes the filtering step entirely. All three candidates pass through at each step, with selection based solely on continuous ranking $\argmax(\bar{s}^* - \Delta)$. This lets us isolate how much the selection mechanism contributes by itself, without any quality gate. Conversely, \textbf{No Selection} keeps the same acceptance criterion as Full Nash but randomly picks among the accepted candidates rather than ranking them. This isolates the impact of the filtering mechanism alone. \textbf{Raw Average} tests whether the Nash equilibrium adjustment adds value by replacing equilibrium scores with simple means and dispersions of raw verifier outputs, while keeping the same dual-criterion filtering and argmax selection structure. Finally, \textbf{Random} serves as our baseline. It accepts everything and chooses randomly, representing a system with neither intelligent filtering nor ranking. By systematically turning each component on and off, we can determine which specific mechanisms drive performance improvements rather than attributing results to their combined effects.

\subsection{Mechanism Decomposition Across Datasets} 
Table \ref{tab:ablation_mechanisms} presents our ablation analysis. We report two metrics across four benchmarks: \textit{Acc.} indicates the average number of candidates accepted per reasoning step (out of 5 generated), while \textit{Rej.\%}  shows the percentage of steps where all candidates are rejected, requiring fallback to continuous ranking. Each row isolates a different aspect of the verification process. 

The comparison between \textbf{Full Nash} and \textbf{No Rejection} reveals what the filtering mechanism accomplishes. When we accept every candidate, the pool size jumps from roughly 2.5 to 3.0, meaning rejection weeds out 12-17\% of options. The \textbf{No Selection} baseline keeps this same filtering but randomly picks from accepted candidates instead of ranking by equilibrium confidence. This lets us see what the selection mechanism adds. Interestingly, Full Nash and No Selection show identical acceptance numbers (both land at 2.64 on CV-Bench-2D, for instance), which confirms that selection works downstream of filtering; it chooses among candidates that already passed, rather than changing which ones pass in the first place.The \textbf{Raw Average} baseline swaps out Nash equilibrium scores for straightforward mean and dispersion calculations. Here we see something noteworthy: the equilibrium-based approach accepts 0.14-0.24 more candidates per step while needing fallback 3-7 percentage points less often. This dual advantage suggests that the Nash equilibrium offers better calibrated disagreement detection. Rather than treating all variance in confidence the same way, it distinguishes between genuine cross-modal conflicts (which should trigger rejection) and ordinary fluctuations in confidence levels (which shouldn't). The result is a system that's simultaneously more permissive and more reliable.

\begin{table}[h]
\centering
\caption{Ablation analysis decomposing contributions of rejection (filtering) and selection (ranking) mechanisms. Values show average candidates per step and percentage of steps requiring fallback.}
\label{tab:ablation_mechanisms}
\small
\begin{tabular}{@{}lcccccccc@{}}
\toprule
\multirow{2}{*}{\textbf{Strategy}} & \multicolumn{2}{c}{\textbf{CV-Bench-2D}} & \multicolumn{2}{c}{\textbf{CV-Bench-3D}} & \multicolumn{2}{c}{\textbf{3DSRBench}} & \multicolumn{2}{c}{\textbf{AI2D}} \\
\cmidrule(lr){2-3} \cmidrule(lr){4-5} \cmidrule(lr){6-7} \cmidrule(lr){8-9}
& Acc. & Rej.\% & Acc. & Rej.\% & Acc. & Rej.\% & Acc. & Rej.\% \\
\midrule
\textbf{Full Nash} & \textbf{2.64} & \textbf{9.1} & \textbf{2.84} & \textbf{2.3} & \textbf{2.54} & \textbf{12.0} & \textbf{2.50} & \textbf{13.4} \\
(Paper's method) & & & & & & & & \\
\midrule
No Rejection & 3.00 & 0.0 & 3.00 & 0.0 & 3.00 & 0.0 & 3.00 & 0.0 \\
(Selection only) & \textcolor{red}{+0.36} & \textcolor{blue}{-9.1} & \textcolor{red}{+0.16} & \textcolor{blue}{-2.3} & \textcolor{red}{+0.46} & \textcolor{blue}{-12.0} & \textcolor{red}{+0.50} & \textcolor{blue}{-13.4} \\
\midrule
No Selection & 2.64 & 9.1 & 2.84 & 2.3 & 2.54 & 12.0 & 2.50 & 13.4 \\
(Rejection only) & \textcolor{blue}{-} & - & - & - & - & - & - & - \\
\midrule
Raw Average & 2.40 & 16.2 & 2.70 & 5.4 & 2.31 & 17.2 & 2.35 & 17.6 \\
(No Nash Eq.) & \textcolor{blue}{-0.24} & \textcolor{red}{+7.1} & -0.14 & +3.1 & \textcolor{blue}{-0.23} & \textcolor{red}{+5.2} & \textcolor{blue}{-0.15} & \textcolor{red}{+4.2} \\
\midrule
Random & 3.00 & 0.0 & 3.00 & 0.0 & 3.00 & 0.0 & 3.00 & 0.0 \\
(Baseline) & +0.36 & -9.1 & +0.16 & -2.3 & +0.46 & -12.0 & +0.50 & -13.4 \\
\bottomrule
\end{tabular}

\vspace{0.3cm}
\small
\textit{Acc.}: Average number of accepted candidates per step (out of 3). \\
\textit{Rej.\%}: Percentage of steps where all candidates rejected (requiring fallback). \\
Values in \textcolor{red}{red} indicate degradation, \textcolor{blue}{blue} indicate improvement relative to Full Nash.
\end{table}

\subsection{Key Findings}

\paragraph{Finding 1: Rejection Mechanism Contribution}
The rejection mechanism filters out 12-17\% of candidates on average (from 3.00 to 2.50-2.84 accepted per step). Critically, in 2.3-13.4\% of reasoning steps, \textit{all} candidates are rejected, forcing the system to fall back to continuous ranking. This suggests the acceptance criterion successfully identifies steps where cross-modal disagreement signals instability.

\paragraph{Finding 2: Selection Mechanism Contribution}
Comparing "Full Nash" vs "No Selection" (which uses random choice among accepted candidates) reveals that both strategies accept the same number of candidates (2.50-2.84) and have identical fallback rates. However, the Full Nash method's intelligent ranking among these accepted candidates selected by the highest equilibrium confidence $\bar{s}^*$ enables better performance. This demonstrates that \textit{among acceptable steps}, the ranking provided by equilibrium scores effectively identifies the most stable continuation.

\paragraph{Finding 3: Complementary Mechanisms}
The rejection and selection mechanisms are complementary rather than redundant:
\begin{itemize}
    \item \textbf{Rejection} (binary filter): Prevents clearly problematic steps from propagating. Acts as a quality gate based on a disagreement structure.
    \item \textbf{Selection} (ranking): Optimizes among plausible candidates. Chooses the option with the highest collective confidence.
\end{itemize}

\paragraph{Finding 4: Nash Equilibrium vs Raw Averaging}
Comparing "Full Nash" to "Raw Average" isolates the contribution of equilibrium-adjusted scores. Raw averaging results in:
\begin{itemize}
    \item Fewer accepted candidates (2.31-2.70 vs 2.50-2.84)
    \item Higher fallback rates (5.4-17.6\% vs 2.3-13.4\%)
    \item More aggressive rejection (difference of 3.1-7.1 percentage points)
\end{itemize}
This indicates that Nash equilibrium scores provide more nuanced disagreement detection, distinguishing between genuine instability (high dispersion) versus simple variance in confidence levels, enabling better calibration of the acceptance criterion.

\subsection{Implications for Framework Design}

The ablation analysis reveals that the Nash equilibrium framework provides value through \textit{two distinct pathways}:

\textbf{Path 1: Disagreement-Aware Filtering.} Equilibrium dispersion ($\Delta = \frac{1}{N}\sum_i |s^*_i - \bar{s}^*|$) captures cross-modal conflict patterns that simple averaging misses. The acceptance criterion ($\Delta < \epsilon \land \bar{s}^* > \tau$) acts as a quality gate, filtering out candidates where specialized judges fundamentally disagree about validity.

\textbf{Path 2: Stability-Aware Ranking.} Among candidates passing the filter, equilibrium-adjusted confidence scores ($\bar{s}^*$) rank options by collective belief strength. When all candidates are rejected, the continuous ranking ($\bar{s}^* - \Delta$) provides a principled fallback that balances confidence against remaining disagreement.

\section{Computational Complexity Analysis}
\label{app:comp_complexity}
\paragraph{Theoretical Complexity}

We analyze the computational overhead of Nash-equilibrium verification relative to the base model. Let $T$ denote the number of reasoning steps, $n$ the number of candidate steps per iteration ($n=3$), and $m$ the number of verifier agents ($m=3$: Visual, Logical, and Contextual).

\paragraph{Base Model Complexity}

The base model generates $T$ reasoning steps sequentially, each requiring a forward pass through the model:
\begin{equation}
\mathcal{C}_{\text{base}} = T \cdot \mathcal{C}_{\text{gen}}
\end{equation}
where $\mathcal{C}_{\text{gen}}$ denotes the cost of generating a single reasoning step (typically 50--200 tokens).

\paragraph{Nash Verification Complexity}

Our approach adds three components at each reasoning step:

\textbf{1. Candidate Generation:} The base model generates $n$ candidate continuations, costing $n \cdot \mathcal{C}_{\text{gen}}$.

\textbf{2. Agent Verification:} Each of the $m$ verifier agents scores all $n$ candidates. Crucially, verifiers output only scalar scores (1-5 tokens), not full reasoning steps (50--200 tokens), this generation asymmetry means each verifier query costs $\alpha \cdot \mathcal{C}_{\text{gen}}$ where $\alpha \approx 1/33$ reflects the token generation ratio. Total verification cost: $n \cdot m \cdot \alpha \cdot \mathcal{C}_{\text{gen}}$.

\textbf{3. Equilibrium Computation:} Solving the $m \times m$ linear system costs $\mathcal{O}(m^3)$ per candidate, which is negligible compared to inference.

The total cost per reasoning step is:
\begin{equation}
\mathcal{C}_{\text{step}} = (n + nm\alpha)\mathcal{C}_{\text{gen}} + \mathcal{O}(nm^3) \approx (n + nm\alpha)\mathcal{C}_{\text{gen}}
\end{equation}

Across $T$ steps, the overhead ratio becomes:
\begin{equation}
\frac{\mathcal{C}_{\text{Nash}}}{\mathcal{C}_{\text{base}}} = n + nm\alpha = 3 + 3 \cdot 3 \cdot \frac{1}{33} \approx 3.27
\end{equation}

This theoretical prediction accounts for the generation asymmetry from the start, rather than treating all forward passes as equivalent.

\paragraph{Empirical Validation}

We measured wall-clock time on CV-Bench-2D:

\begin{itemize}
    \item \textbf{Base Model:} 1378.79 seconds (5.58 seconds per sample)
    \item \textbf{Nash Verification:} 5245.75 seconds (21.24 seconds per sample)
    \item \textbf{Observed Overhead:} $3.80$
\end{itemize}

The observed $3.80$ overhead closely matches our theoretical prediction of $3.27$.

The key insight is that incorporating generation asymmetry into the theoretical model from the start yields predictions that align well with practice, confirming that equilibrium computation is negligible and verification overhead scales primarily with the number of verifier queries.

\paragraph{Summary}

Our theoretical analysis predicts $3.27\times$ overhead when accounting for the generation asymmetry between base model reasoning steps (50-200 tokens) and verifier score outputs (1--5 tokens). This closely matches the empirical $3.80\times$ overhead measured across 247 samples. The Nash equilibrium computation itself is negligible, and the verification cost is dominated by model inference.
\newpage
\section{Detailed Experimental Setup}
\label{app:experimental_setup}

\paragraph{Base Model Configuration}

We use Qwen2.5-VL-7B-Instruct as our primary reasoning model. The model generates candidate reasoning steps through chain-of-thought prompting with the instruction: ``\textit{[Question]. Reason step by step.}'' We employ temperature sampling with $T=0.8$ and top-$p=0.6$ to encourage diversity among the three candidate continuations generated at each step. Generation stops when the model produces either an end-of-sequence token or a newline character, with a maximum of 1000 new tokens per step.

\paragraph{Verification Agent Architecture}

All three verification agents are implemented using Qwen2.5-VL-7B-Instruct as the backbone model. Each agent receives the same visual input (the question image) along with the question text, previous reasoning steps (assumed correct), and the current candidate step to verify. Agents operate independently and output scores in $[0, 1]$.

\paragraph{Visual Agent (V)}

The Visual Agent evaluates whether objects and spatial relationships mentioned in the reasoning step are visually verifiable. The agent is specifically instructed to maintain balance between strictness and common-sense spatial reasoning, and to output only a single number between 0.0 and 1.0.

\paragraph{Logical Agent (L)}

The Logical Agent assesses whether the reasoning step follows logically from previous steps and makes progress toward answering the question. It evaluates:
\begin{itemize}
    \item Whether the step builds coherently on established facts
    \item Whether logical inferences are valid
    \item Whether the step moves closer to resolving the question
\end{itemize}

\paragraph{Contextual Agent (C)}

The Contextual Agent determines whether the step maintains focus on the original question and avoids introducing irrelevant or tangential information. It penalizes steps that:
\begin{itemize}
    \item Describe details unrelated to the question
    \item Make definitive claims about obscured or cropped-out objects
    \item Introduce unnecessary speculation
\end{itemize}

\subsection{Dataset and Evaluation}

We evaluate our approach across six diverse vision-language benchmarks that test different aspects of multimodal reasoning:

\textbf{3DSRBench} is a comprehensive 3D spatial reasoning benchmark comprising 2,772 manually annotated visual question-answer pairs across 12 question types. The benchmark evaluates four main categories of 3D awareness: height, location, orientation, and multi-object reasoning. It includes questions based on both natural images from MS-COCO and multi-view synthetic images, with particular emphasis on testing robustness across common and uncommon camera viewpoints. The benchmark employs careful design to avoid trivial answers and uses novel evaluation strategies like FlipEval to ensure robust assessment.

\textbf{CV-Bench} addresses vision-centric evaluation through 2,638 manually inspected examples repurposed from standard vision benchmarks including ADE20k, COCO, and OMNI3D. The benchmark is divided into two components: CV-Bench-2D evaluates spatial relationships and object counting, while CV-Bench-3D assesses depth order and relative distance understanding. By formulating natural language questions that probe fundamental visual understanding, the benchmark tests whether models can perform classic computer vision tasks within a multimodal context.

\textbf{BLINK} focuses on core visual perception abilities that can be solved by humans ``within a blink,'' reformatting 14 classic computer vision tasks into 3,807 multiple-choice questions. The benchmark spans pixel-level to image-level perception tasks including relative depth estimation, visual correspondence, forensics detection, multi-view reasoning, and visual similarity. A key feature is the incorporation of diverse visual prompts such as circles, boxes, and image masks alongside textual questions, deliberately designed to resist solutions based purely on language mediation.

\textbf{MMStar} is an elite vision-indispensable benchmark comprising 1,500 challenge samples meticulously selected by humans to address issues of visual dependency and data leakage in existing benchmarks. The benchmark evaluates six core capabilities and 18 detailed axes, with each sample undergoing strict human review to ensure visual dependency, minimal data leakage, and requirements for advanced multi-modal capabilities. Beyond traditional accuracy metrics, MMStar introduces two novel metrics to measure multi-modal gain and multi-modal leakage in model training.

\textbf{AI2D} consists of 4,817 illustrative diagrams for research on diagram understanding and associated question answering. The dataset represents topics in primary school natural sciences such as food webs, life cycles, moon phases, and human physiology. Each diagram has been densely annotated with object segmentations, diagrammatic elements like arrows and lines, and text elements. The benchmark requires understanding abstract visual representations and symbolic elements common in scientific illustrations, testing both visual comprehension and scientific reasoning abilities.

We process each question by iteratively generating and verifying reasoning steps until the model produces an end-of-sequence token. The final answer is extracted from the complete reasoning trace using (Gemma-3:12B\cite{gemmateam2025gemma3technicalreport} via Ollama) and evaluated against the ground truth using string matching.

\paragraph{Computational Resources}

All experiments are conducted on NVIDIA A100 GPUs. Total evaluation across six benchmarks required approximately 180 GPU-hours.


\newpage
\section{Implementation Details: Nash-Equilibrium Computation}
\label{app:implementation}
\begin{figure}[h]
\begin{center}
\includegraphics[width=\textwidth]{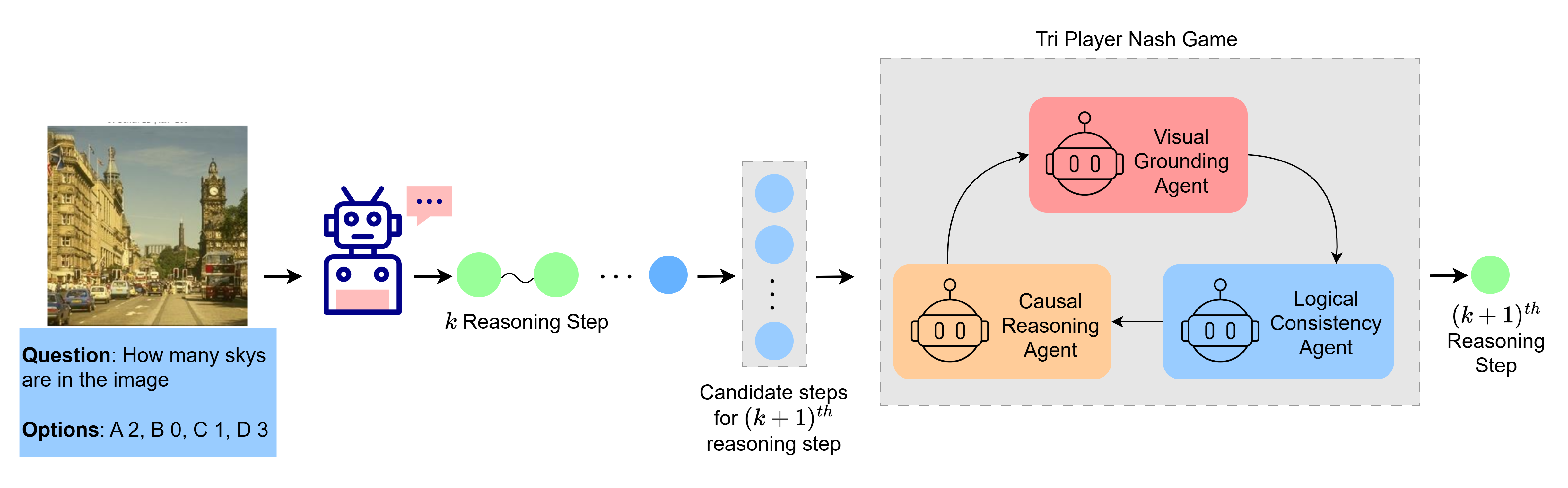}
\end{center}
\caption{Overview of Our approach. At reasoning step $k$, the base model generates $n$ candidate steps via sampling. Three independent verifier agents (Visual, Logical, Contextual) score each candidate. Equilibrium scores are computed and the step with highest stable consensus (low dispersion, high mean confidence) is selected as the $(k+1)^{\text{th}}$ step.}
\end{figure}

\begin{algorithm}[t]
\caption{Nash-Equilibrium Step-Wise Verification}
\label{alg:nash_verification}
\begin{algorithmic}[1]
\REQUIRE Base model $\mathcal{M}_{\text{base}}$, Verifier agents $\{\mathcal{V}, \mathcal{L}, \mathcal{C}\}$, Image $I$, Question $Q$
\REQUIRE Number of candidates $n=3$, Stubbornness parameters $\{\lambda_V=1.5, \lambda_L=1.0, \lambda_C=0.8\}$
\REQUIRE Acceptance thresholds: dispersion $\epsilon=0.1$, confidence $\tau=0.6$
\ENSURE Complete reasoning trace $r_{1:T}$ and final answer

\STATE Initialize reasoning trace $r \leftarrow \emptyset$
\STATE Generate initial step: $r_0 \sim \mathcal{M}_{\text{base}}(I, Q)$
\STATE Append $r_0$ to $r$

\WHILE{$r$ does not contain end-of-sequence token}
    \STATE $\textsc{Candidates} \leftarrow \emptyset$
    \STATE $\textsc{RawScores} \leftarrow \emptyset$
    
    \FOR{$i = 1$ to $n$}
        \STATE Generate candidate: $c_i \sim \mathcal{M}_{\text{base}}(I, Q, r)$ with temperature $T=0.8$
        \STATE Query Visual Agent: $\hat{s}_V^{(i)} \leftarrow \mathcal{V}(I, Q, r, c_i)$ \COMMENT{Score $\in [0,1]$}
        \STATE Query Logical Agent: $\hat{s}_L^{(i)} \leftarrow \mathcal{L}(I, Q, r, c_i)$
        \STATE Query Contextual Agent: $\hat{s}_C^{(i)} \leftarrow \mathcal{C}(I, Q, r, c_i)$
        \STATE Store: $\textsc{Candidates}[i] \leftarrow c_i$, $\textsc{RawScores}[i] \leftarrow \{\hat{s}_V^{(i)}, \hat{s}_L^{(i)}, \hat{s}_C^{(i)}\}$
    \ENDFOR
    
    \STATE \COMMENT{\textbf{Equilibrium Computation \& Selection}}
    \STATE $\textsc{Results} \leftarrow \emptyset$
    
    \FOR{each candidate $i$ in $\{1, \ldots, n\}$}
        \STATE Solve $\mathbf{s}^{*(i)}$ per Eq. \ref{eq:linear}
        \STATE Compute mean: $\bar{s}^{*(i)} \leftarrow \frac{1}{3}\sum_{j \in \{V,L,C\}} s_j^{*(i)}$
        \STATE Compute dispersion: $\Delta^{(i)} \leftarrow \frac{1}{3}\sum_{j \in \{V,L,C\}} |s_j^{*(i)} - \bar{s}^{*(i)}|$
        \STATE Check acceptance: $\text{accepted}^{(i)} \leftarrow (\Delta^{(i)} < \epsilon) \land (\bar{s}^{*(i)} > \tau)$
        \STATE Store: $\textsc{Results}[i] \leftarrow \{\mathbf{s}^{*(i)}, \bar{s}^{*(i)}, \Delta^{(i)}, \text{accepted}^{(i)}\}$
    \ENDFOR
    
    \STATE $\textsc{ValidSteps} \leftarrow \{i : \textsc{Results}[i].\text{accepted} = \text{True}\}$
    
    \IF{$\textsc{ValidSteps} \neq \emptyset$}
        \STATE $i^* \leftarrow \argmax_{i \in \textsc{ValidSteps}} \bar{s}^{*(i)}$ \COMMENT{Highest confidence among stable steps \textbf{Normal Model}}
    \ELSE
        \STATE $i^* \leftarrow \argmax_{i=1,\ldots,n} (\bar{s}^{*(i)} - \Delta^{(i)})$ \COMMENT{Best balance if none accepted \textbf{Fallback Model}}
    \ENDIF
    
    \STATE Append selected step: $r \leftarrow r \cup \{\textsc{Candidates}[i^*]\}$
\ENDWHILE

\STATE Extract final answer from $r$
\RETURN $r$, final answer
\end{algorithmic}
\end{algorithm}

Here we discuss how the Nash-equilibrium based verification implemented in practice. 

At each reasoning step, the base MLLM proposes several candidate continuations (typically 3 candidates per step). For every candidate, three frozen verifier agents are queried independently: a visual grounding agent (V), a logical consistency agent (L), and a contextual reasoning agent (C). Each produces a scalar confidence score in $[0,1]$, reflecting its modality specific assessment of eaach step.

If one verifier is confident while the others are uncertain, averaging may obscure this inconsistency. We want a mechanism that surfaces disagreement explicitly and allows agents to partially adjust their beliefs toward consensus without forcing uniformity when genuine divergence exists.

The equilibrium formulation captures this trade off naturally. Each verifier prefers agreement with others, but not at the cost of abandoning its own judgment entirely. This is encoded through a quadratic payoff function, which admits a unique, closed-form Nash equilibrium. The equilibrium scores thus represent a principled resolution of inter-agent tension, rather than an ad-hoc blend.

\subsection*{Equilibrium formulation and computation}

The implementation uses a heterogeneous trade-off parameter formulation, where different verifiers are assigned different sensitivities to consensus pressure. This is motivated by the observation that certain verifiers should be more resistant to group influence depending on the nature of the reasoning step.

Each agent's equilibrium score $s_i^*$ satisfies:
\begin{equation}
s_i^* = \frac{\bar{s}_{-i}^* + \lambda_i \hat{s}_i}{1 + \lambda_i}
\end{equation}
where $\hat{s}_i$ is agent $i$'s raw confidence score, $\bar{s}_{-i}^* = \frac{1}{n-1} \sum_{j \neq i} s_j^*$ is the mean equilibrium score of all other agents, and $\lambda_i > 0$ controls how strongly agent $i$ weights its own judgment relative to group consensus.

This system of equations can be rewritten as a linear system and solved exactly:
\begin{equation}
\label{eq:linear}
\left(1 + \lambda_i\right) s_i^* - \frac{1}{n-1} \sum_{j \neq i} s_j^* = \lambda_i \hat{s}_i, \quad i = 1, \dots, n
\end{equation}
The system is solved using a standard linear solver. The resulting matrix is typically well-conditioned when $\lambda_i > 0$, but if numerical issues arise the implementation falls back to the raw verifier scores. All equilibrium scores are clipped to $[0,1]$ to maintain valid confidence values.

In our experiments, we set
$n=5$
$\lambda_{\text{V}} = 1.5$, $\lambda_{\text{L}} = 1.0$, and $\lambda_{\text{C}} = 0.8$. This configuration reflects the intuition that the visual verifier (V) should be most resistant to consensus pressure on perception-heavy steps, while the contextual verifier (C) may be more flexible when visual or linguistic evidence is strong. These values are fixed across all datasets and require no tuning.

Importantly, \emph{no iterative optimization, learning, or approximation is used}. The equilibrium is computed via a direct linear solve at every reasoning step, adding negligible overhead 
compared to the cost of querying the verifier models themselves 

\subsection*{Step selection via equilibrium statistics}

Once equilibrium scores are obtained, the system computes two summary statistics:
\begin{itemize}
    \item \textbf{Mean equilibrium confidence} $\bar{s}^* = \frac{1}{n} \sum_i s_i^*$: measures collective endorsement of the step.
    \item \textbf{Equilibrium dispersion} $\Delta = \frac{1}{n} \sum_i |s_i^* - \bar{s}^*|$: measures residual disagreement after equilibrium adjustment.
\end{itemize}
Candidate steps are accepted only if they simultaneously achieve \textbf{high collective confidence} ($\bar{s}^* > \tau$) and \textbf{low inter-agent dispersion} ($\Delta < \epsilon$). In our experiments, we set $\tau = 0.6$ and $\epsilon = 0.1$. This dual criterion is stricter than confidence alone: a step with high average confidence but high dispersion signals unresolved conflict and is rejected.

When multiple candidate steps are evaluated at the same reasoning position, rejected steps are discarded immediately. Among accepted steps, the one with the highest $\bar{s}^*$ is selected to extend the reasoning trace. If \emph{no} candidate step is accepted (i.e., all steps have either low confidence or high dispersion), the implementation selects the step that maximizes $\bar{s}^* - \Delta$, prioritizing the best available balance between confidence and agreement. This fallback ensures the reasoning process can continue even when all candidates are suboptimal, while still preferring more stable steps.

\subsection*{Why this matters}

The equilibrium mechanism serves as a lightweight consensus protocol over frozen verifiers. It makes disagreement \emph{explicit and actionable}, rather than burying it in an average. It requires no training or calibration beyond setting three hyperparameters ($\lambda_i$ values, $\tau$, and $\epsilon$), all of which remain fixed across datasets, and integrates naturally into step-wise reasoning by filtering unstable steps before they can compound downstream errors.

Crucially, the equilibrium is not a heuristic approximation---it is the \emph{exact} solution to a well-defined coordination game. This gives the filtering process a game-theoretic justification and makes the system's behavior more interpretable: rejected steps are precisely those where verifiers could not reach a stable agreement, even after accounting for consensus pressure. The heterogeneous $\lambda_i$ values allow the system to implicitly adapt to different reasoning regimes without explicit step-type classification.
\section{Nash Equilibrium Existence and Uniqueness}
\label{app:proof}

\begin{proposition}
The verification game defined by the utility function
\[
u_i(s_i, s_{-i}) = -\left(s_i - \bar{s}_{-i}\right)^2 
- \lambda_i \left(s_i - \hat{s}_i\right)^2
\]
admits a unique Nash equilibrium.
\end{proposition}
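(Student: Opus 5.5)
The plan is to compute the best responses explicitly and then show that the resulting fixed-point system is a strictly diagonally dominant linear system. I will treat the strategy space first as $\mathbb{R}$ and only afterwards address the constraint $s_i\in[0,1]$. Fix $N\ge 2$ agents with $\lambda_i>0$ and $\bar{s}_{-i}=\frac{1}{N-1}\sum_{j\ne i}s_j$.

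\textbf{Step 1 (best responses).} For fixed $s_{-i}$, $u_i$ is a quadratic in $s_i$ with $\partial^2 u_i/\partial s_i^2=-2(1+\lambda_i)<0$, so it is strictly concave. Setting the derivative to zero gives the unique best response
\[
BR_i(s_{-i})=\frac{\bar{s}_{-i}+\lambda_i\hat{s}_i}{1+\lambda_i}.
\]
A profile is a Nash equilibrium if and only if $s_i=BR_i(s_{-i})$ for all $i$. This is exactly the linear system of Eq.~\ref{eq:linear}, written as $A s=\Lambda\hat{s}$, where $A_{ii}=1+\lambda_i$, $A_{ij}=-\frac{1}{N-1}$ for $j\ne i$, and $\Lambda=\mathrm{diag}(\lambda_i)$.

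\textbf{Step 2 (existence and uniqueness).} In each row, $|A_{ii}|=1+\lambda_i>1=\sum_{j\ne i}|A_{ij}|$, so $A$ is strictly diagonally dominant. By the Levy--Desplanques theorem (or Gershgorin), $A$ is nonsingular, so the system has exactly one solution $s^*=A^{-1}\Lambda\hat{s}$. As an alternative that also explains the closed form, the best-response map $T(s)_i=BR_i(s_{-i})$ satisfies $\|T(s)-T(s')\|_\infty\le \max_i\frac{1}{1+\lambda_i}\|s-s'\|_\infty$. It is therefore a contraction, and Banach's fixed-point theorem gives existence and uniqueness directly. I expect to include this remark.

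\textbf{Step 3 (the box constraint).} This is the step I expect to be the main obstacle, because the game is stated with $s_i\in[0,1]$. I would show $s^*\in[0,1]^N$ whenever $\hat{s}\in[0,1]^N$. The map $T$ is monotone and maps $[0,1]^N$ into itself, since each $BR_i$ is a convex combination of $\bar{s}_{-i}\in[0,1]$ and $\hat{s}_i\in[0,1]$. Hence the unique fixed point of the contraction, reachable by iterating $T$ from any point of the cube, lies in the cube. It follows that the interior first-order conditions are the correct equilibrium conditions: no agent's constrained best response is clipped, because the unconstrained maximizer $BR_i$ is already feasible.

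For uniqueness in the constrained game, I would note that any constrained equilibrium satisfies $s_i=\Pi_{[0,1]}(BR_i(s_{-i}))$. The projected map is still a contraction in $\|\cdot\|_\infty$, since projection is nonexpansive, so it has a single fixed point, and that fixed point must be $s^*$. This also justifies that the clipping in the implementation is never active in exact arithmetic.
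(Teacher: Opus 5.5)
Your proof is correct, and it takes a different route from the paper. The paper checks three things: that $[0,1]$ is compact and convex, that $u_i$ is continuous, and that $u_i$ is strictly concave in $s_i$. It then cites Rosen's theorem. You instead do the following:
\begin{itemize}
\item compute the best response explicitly;
\item reduce equilibrium to the linear system $As=\Lambda\hat{s}$;
\item obtain uniqueness from strict diagonal dominance, or equivalently from the sup-norm contraction with modulus $\max_i 1/(1+\lambda_i)$;
\item treat the box constraint separately, via invariance of the cube under $T$ and the projected contraction $\Pi_{[0,1]}\circ T$.
\end{itemize}

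The paper's route is shorter and more generic. However, the three conditions it verifies only give existence in Rosen's framework. Rosen's uniqueness result requires diagonal strict concavity, a joint condition on the pseudo-gradient that own-strategy strict concavity does not imply. For example, at $\lambda_i=0$ every listed condition still holds, since $\partial^2u_i/\partial s_i^2=-2<0$. Yet every constant profile $c\mathbf{1}$ with $c\in[0,1]$ is then an equilibrium.

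Your diagonal-dominance step is exactly what fills this gap. Because $A$ is symmetric, has positive diagonal, and is strictly diagonally dominant (using $\lambda_i>0$), it is positive definite. The pseudo-Jacobian of the game is $-2A$, so this is precisely Rosen's sufficient condition for diagonal strict concavity. Your argument therefore gives a self-contained uniqueness proof where the paper's citation does not.

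Your approach also buys three things the paper states without proof:
\begin{itemize}
\item the closed form $s^*=A^{-1}\Lambda\hat{s}$;
\item that the unconstrained first-order conditions (the paper's linear solve) really characterize the equilibrium of the game played on $[0,1]^N$;
\item that the clipping step in the implementation is inactive whenever $\hat{s}\in[0,1]^N$.
\end{itemize}

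The monotonicity of $T$ you mention in Step 3 is not needed. Invariance of the closed cube together with the contraction property already suffices.
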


\begin{proof}
The result follows directly from Rosen's theorem~\citep{rosen1965existence} for
concave games. We verify the required conditions below.

\textbf{(1) Compact and convex strategy space.}
Each agent's strategy space \( s_i \in [0,1] \) is compact and convex.

\textbf{(2) Continuity.}
The utility function \( u_i \) is quadratic and therefore continuous in all
arguments.

\textbf{(3) Strict concavity.}
Taking derivatives with respect to \( s_i \), we obtain
\begin{align}
\frac{\partial u_i}{\partial s_i}
&= -2\left(s_i - \bar{s}_{-i}\right)
   - 2\lambda_i \left(s_i - \hat{s}_i\right), \\
\frac{\partial^2 u_i}{\partial s_i^2}
&= -2 - 2\lambda_i
 = -2(1 + \lambda_i) < 0
 \quad \forall\, \lambda_i > 0.
\end{align}

Thus, \( u_i \) is strictly concave in each player's own strategy. By
Rosen's theorem, the game admits a unique Nash equilibrium.
\end{proof}

\newpage
\section{Qualitative Samples}
Here, we present the pairwise comparison of the reasoning traces between the base model and our proposed approach.

\begin{figure}[!h]
    \centering
    \includegraphics[width=0.9\linewidth]{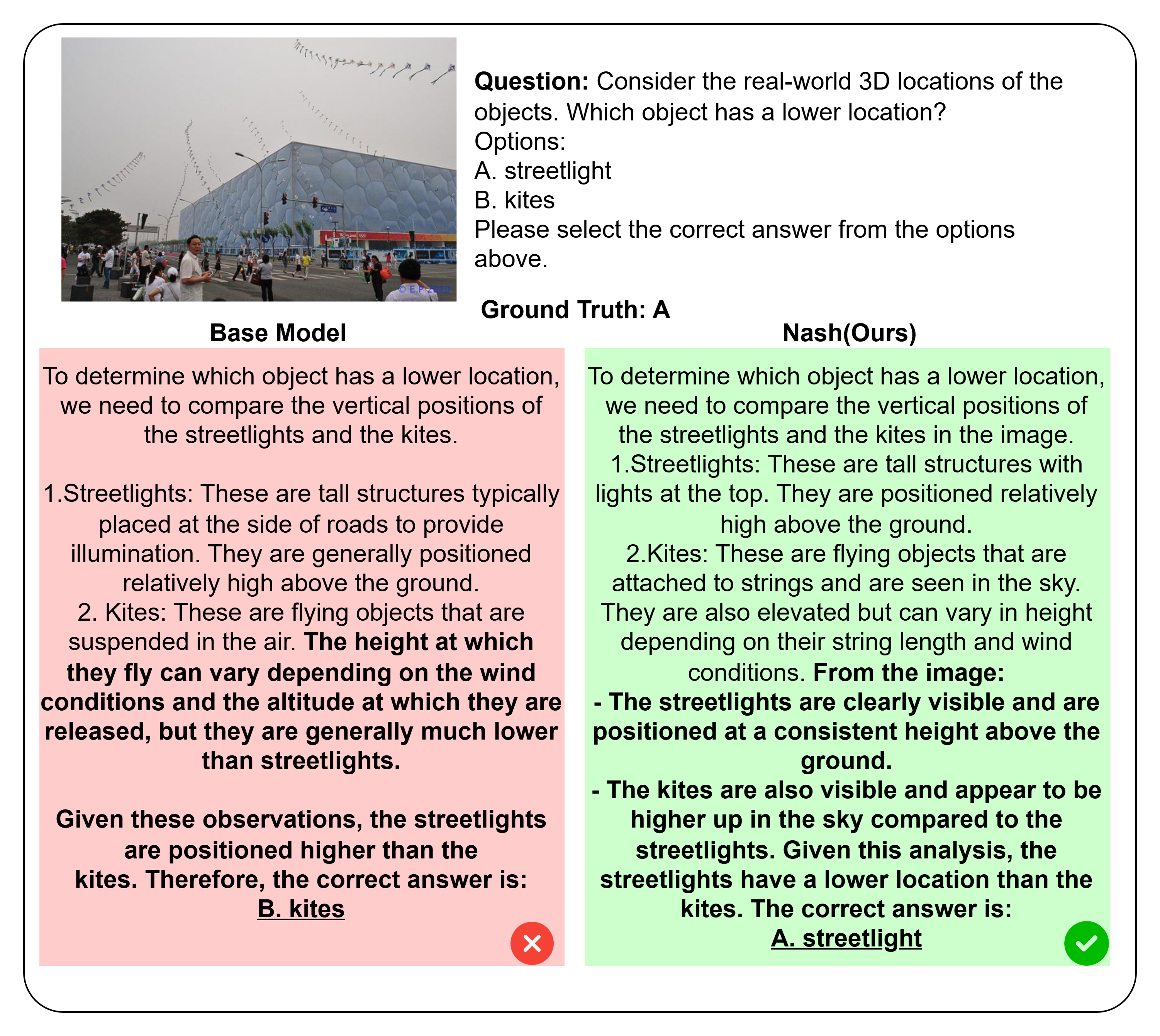}
    \caption{\textbf{Disagreement analysis:} Given a scene with streetlights and kites, the base model (left) incorrectly concludes that kites have a lower real-world location. In contrast, our approach (right) correctly reasons about  the actual vertical positions visible in the image and selects streetlight, consistent with the ground truth. \textbf{Equilibrium effect:} In the base model reasoning, the height of kites and street lights seems to be derived form general reasoning about heights of these objects. While the equilibrium between the visual agent and the logical agent grounds the reasoing steps in the image leading to correcting the prediction from \emph{kites} to \emph{streetlight}}.
    
    \label{fig:qs1}
\end{figure}

\begin{figure}[!h]
    \centering
    \includegraphics[width=0.9\linewidth]{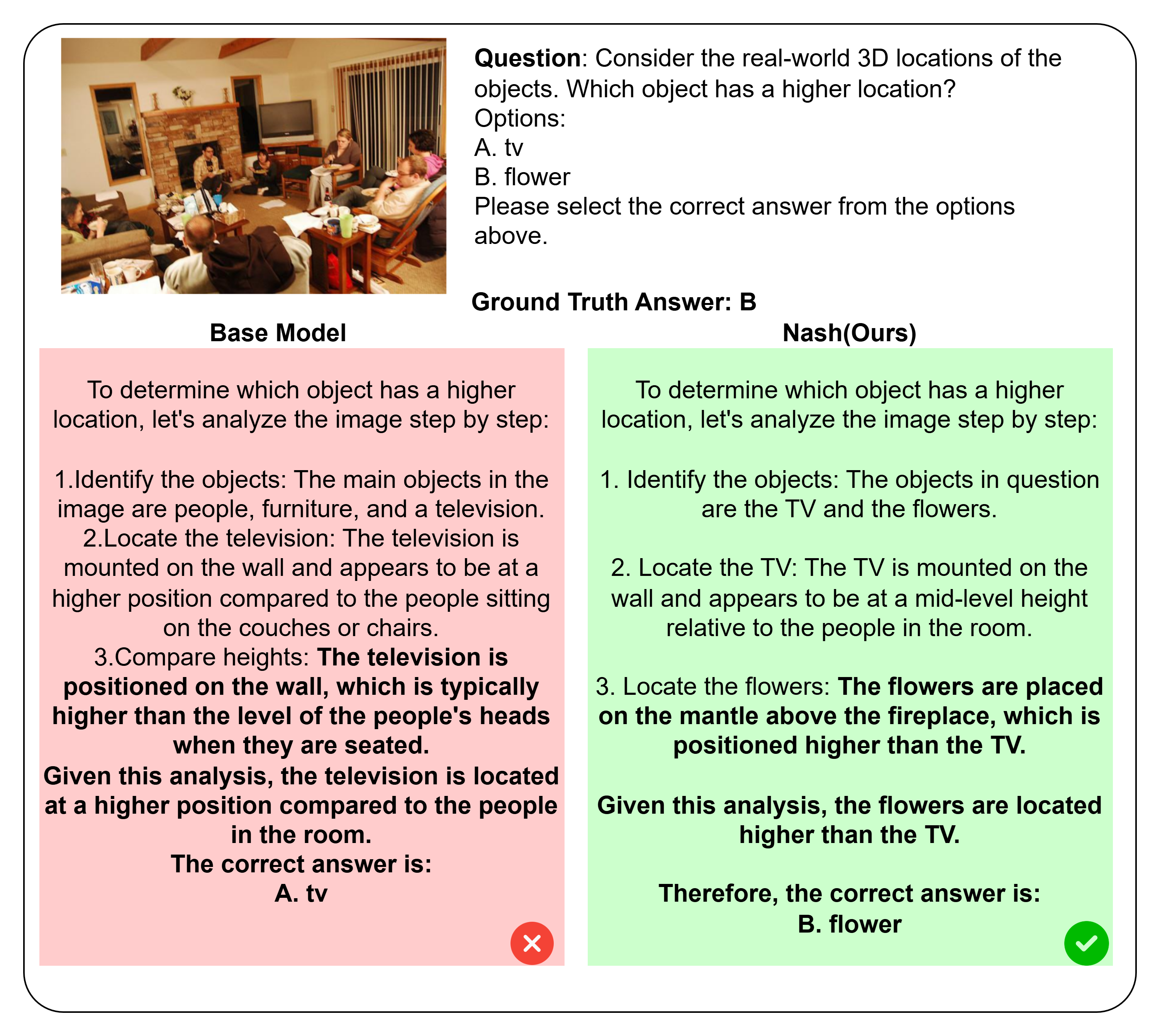}
    \caption{ \textbf{Disagreement analysis.} Given a scene containing a TV and flowers, the base model incorrectly predicts the TV as higher due to wall placement. In contrast, our Nash-based approach correctly uses scene context and identified that the flowers on the mantle are positioned higher than the TV, consistent with the ground truth. \textbf{Equilibrium effect.} In the base model reasoning the position of the television and the people seems to be infered from general knowledge, however the equilibrium between the visual and the logical agent maintains the grounding of the reasoning steps thereby correcting the prediction to \emph{flower}}.
    \label{fig:qs2}
\end{figure}



\begin{figure}[!h]
    \centering
    \includegraphics[width=0.9\linewidth]{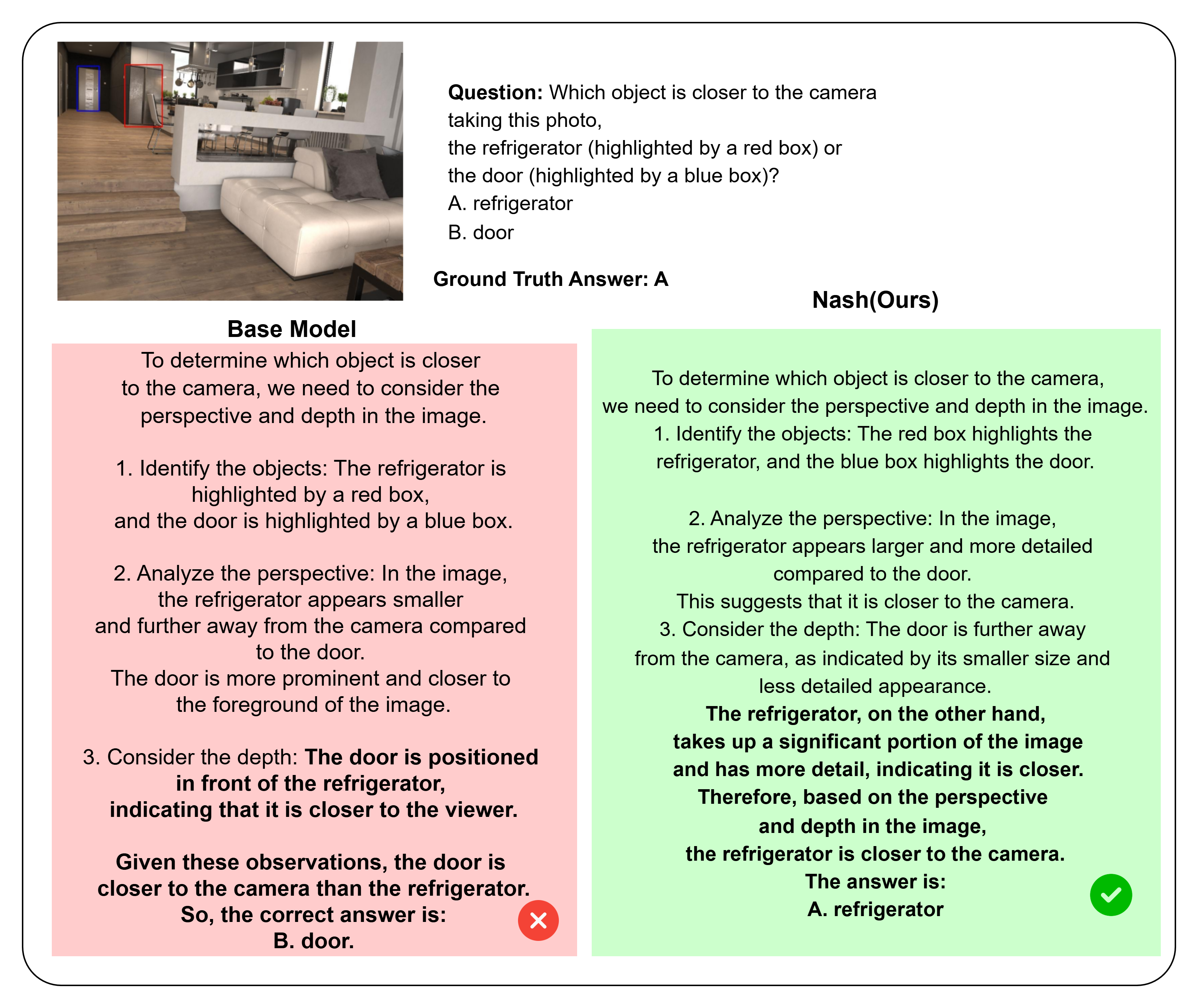}
    \caption{\textbf{Disagreement analysis: } Qualitative comparison on a real-world depth reasoning task involving a refrigerator and a door. The base model incorrectly infers that the door is closer to the camera based on visual prominence. Our method correctly reasons about perspective, scale, and spatial context to identify the refrigerator as closer, consistent with the ground truth. \textbf{Equilibrium effect: } In the base model reasoning, during perspective analysis the bounding box of the refrigerator and the doors seems to be switched up leading to incorrect answer. However, the equilibrium between the contextual agent and the visual agent maintains the consistency of the bounding box mapping to their respective object leading to  correcting the answer from \emph{door} to \emph{refrigerator}}.
    \label{fig:qs4}
\end{figure}

\begin{figure}[!h]
    \centering
    \includegraphics[width=0.9\linewidth]{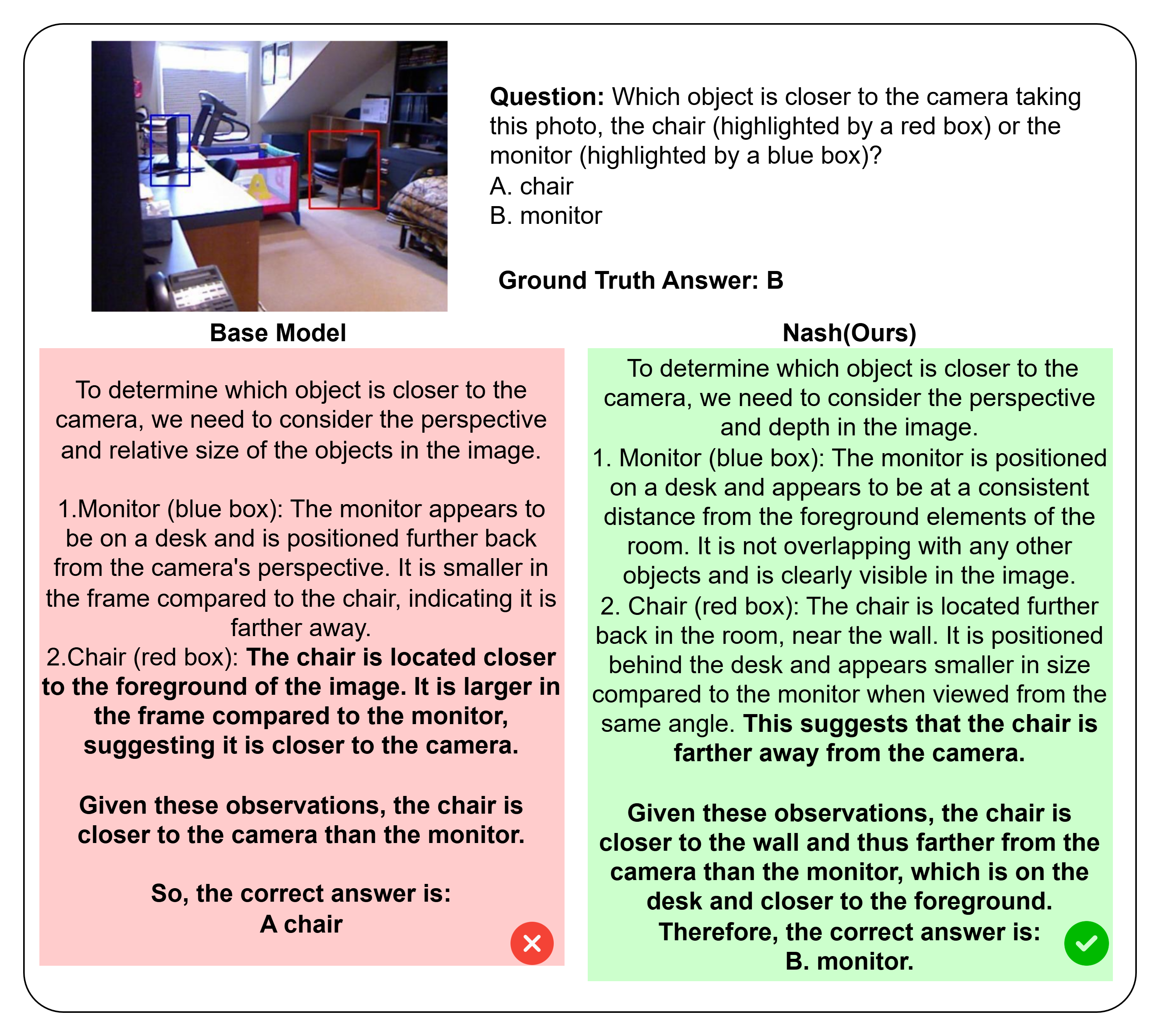}
    \caption{\textbf{Disagreement analysis: }The task requires determining whether a chair or a monitor is closer to the camera. The base model relies on relative object size and incorrectly selects the chair. Our Nash-based method correctly accounts for spatial layout and depth ordering, identifying the monitor as closer to the camera. \textbf{Equilibrium effect}:  In the problem setup the equilibrium between the logical agent and visual agent let to selection of reasoning path that focused on perspective and depth unlike the base model that focused on perspective and relative size, thereby correcting the prediction from \emph{chair} to  \emph{monitor}}.
    \label{fig:placeholder}
\end{figure}

\begin{figure}[!h]
    \centering
    \includegraphics[width=0.9\linewidth]{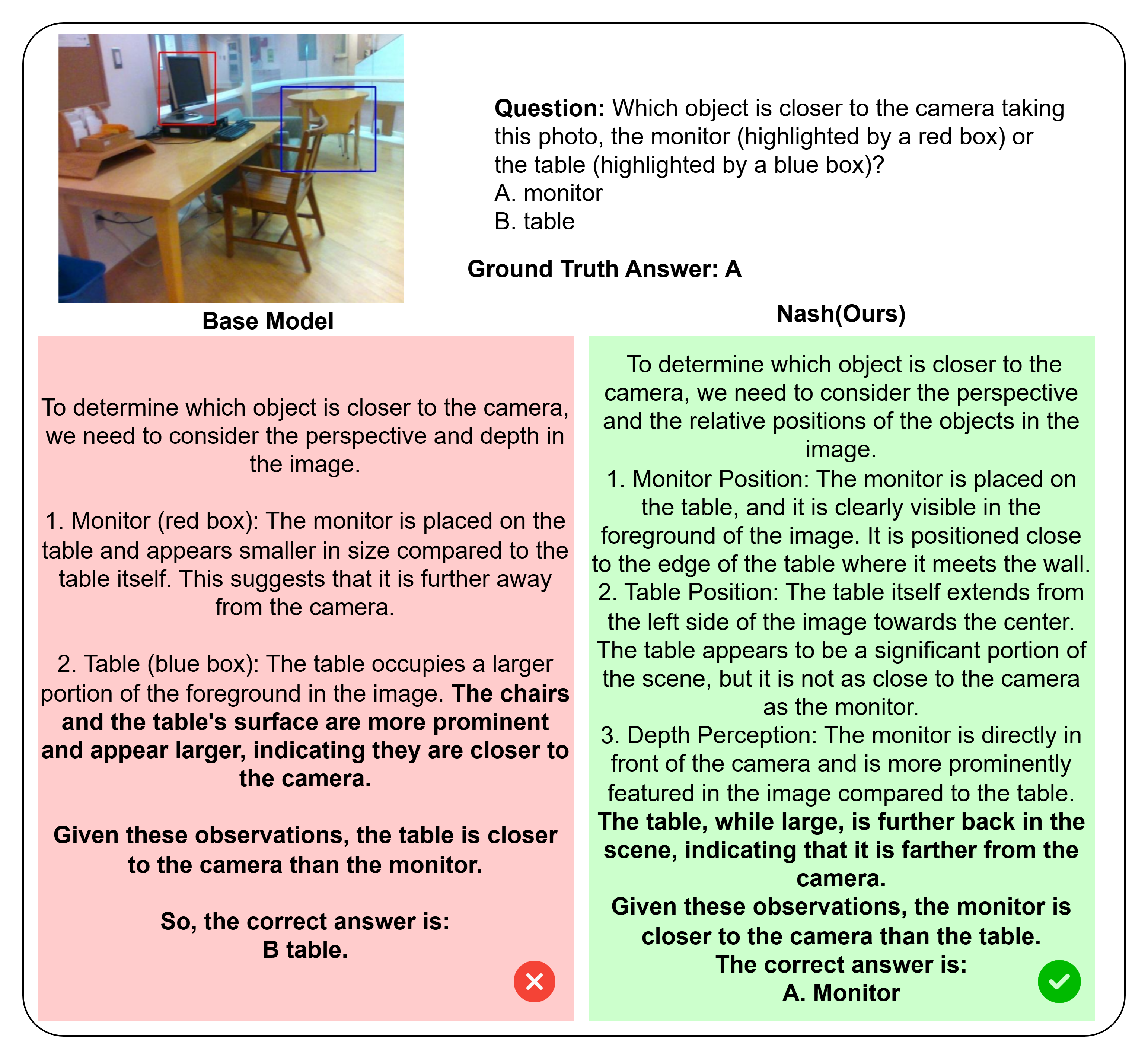}
    \caption{\textbf{Disagreement analysis; } Given a scene containing a monitor and a table, the base model incorrectly concludes that the table is closer to the camera. In contrast, our approach correctly analyzes foreground placement and perspective cues to determine that the monitor is closer, matching the ground truth. \textbf{Equilibrium effect: } In the base model reasonings traces, the models reasons about the table in the foreground on which the monitor is placed even though the questions asks about the table highlighted by blue box in the background. The equilibrium between the contextual and visual agent maintains the consistency of the reasonning steps to focus on the table in the background leading to the correct prediction of \emph{monitor}}.
    \label{fig:qs6}
\end{figure}

\begin{figure}[!h]
    \centering
    \includegraphics[width=0.9\linewidth]{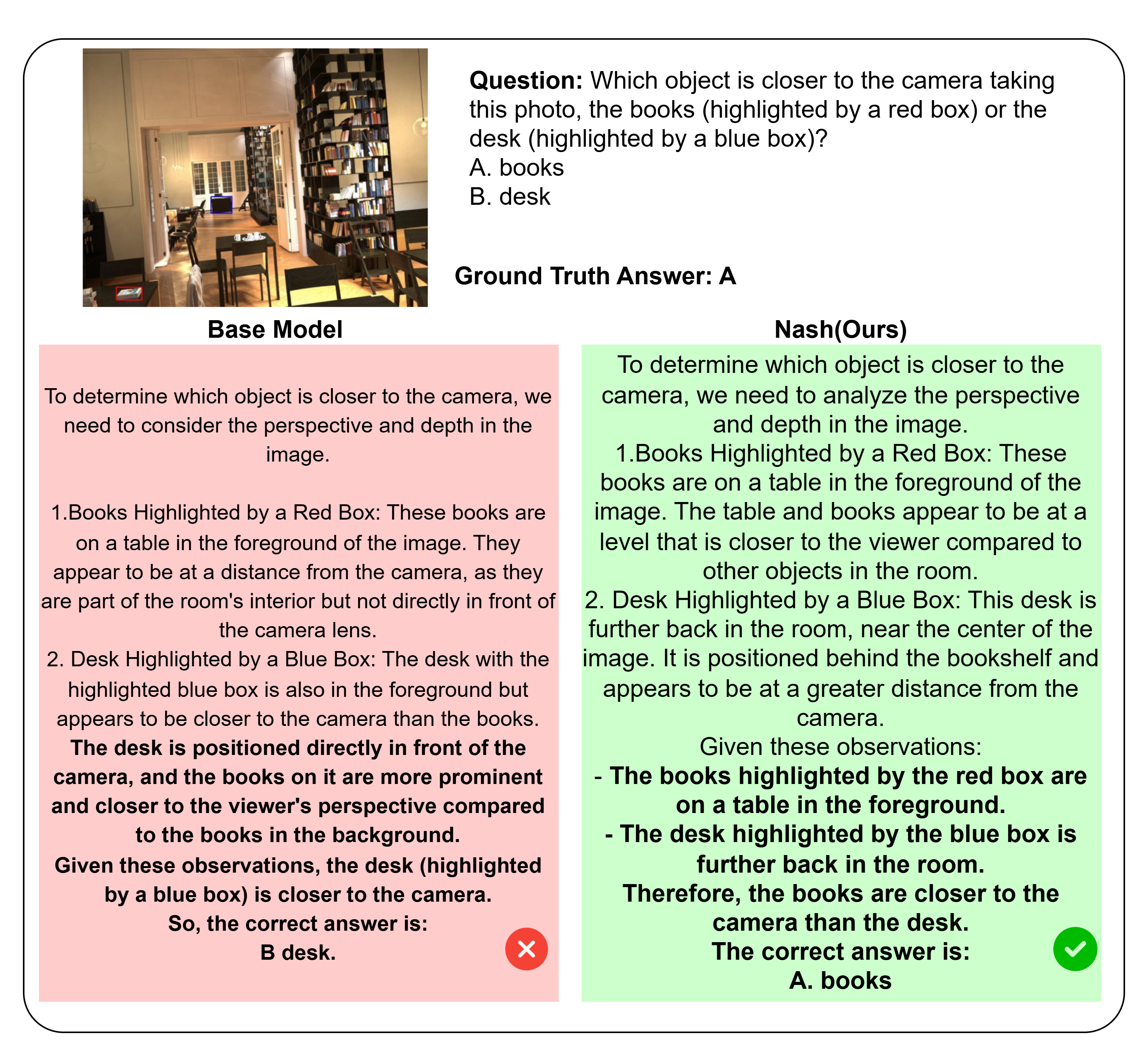}
    \caption{\textbf{Disagreement analysis: } The task asks which object is closer to the camera between books and a desk. The base model incorrectly predicts the desk as closer due to misleading prominence cues, whereas our method correctly reasons about perspective and depth, identifying the books as closer to the camera, consistent with the ground truth. \textbf{Equilibrium effect:} When the base model reasons about the \textit{desk highlighted by a blue box} even though it mentions desk highlighted by a blue box it seems to focus on the desk in the foreground. However, the equilibrium between the visual and the contextual agent selects the reasoning steps that maintain this consistency in the reasoning chain, correcting the final answer from \emph{desk} to  \emph{books}}.
    \label{fig:qs7}
\end{figure}
\newpage
\section{Prompt Templates}
\label{app:prompt_template}
Here are the prompt templates for the three verifiers we are using in our proposed approach
\begin{figure}[h]
    \centering
    \includegraphics[width=0.9\linewidth]{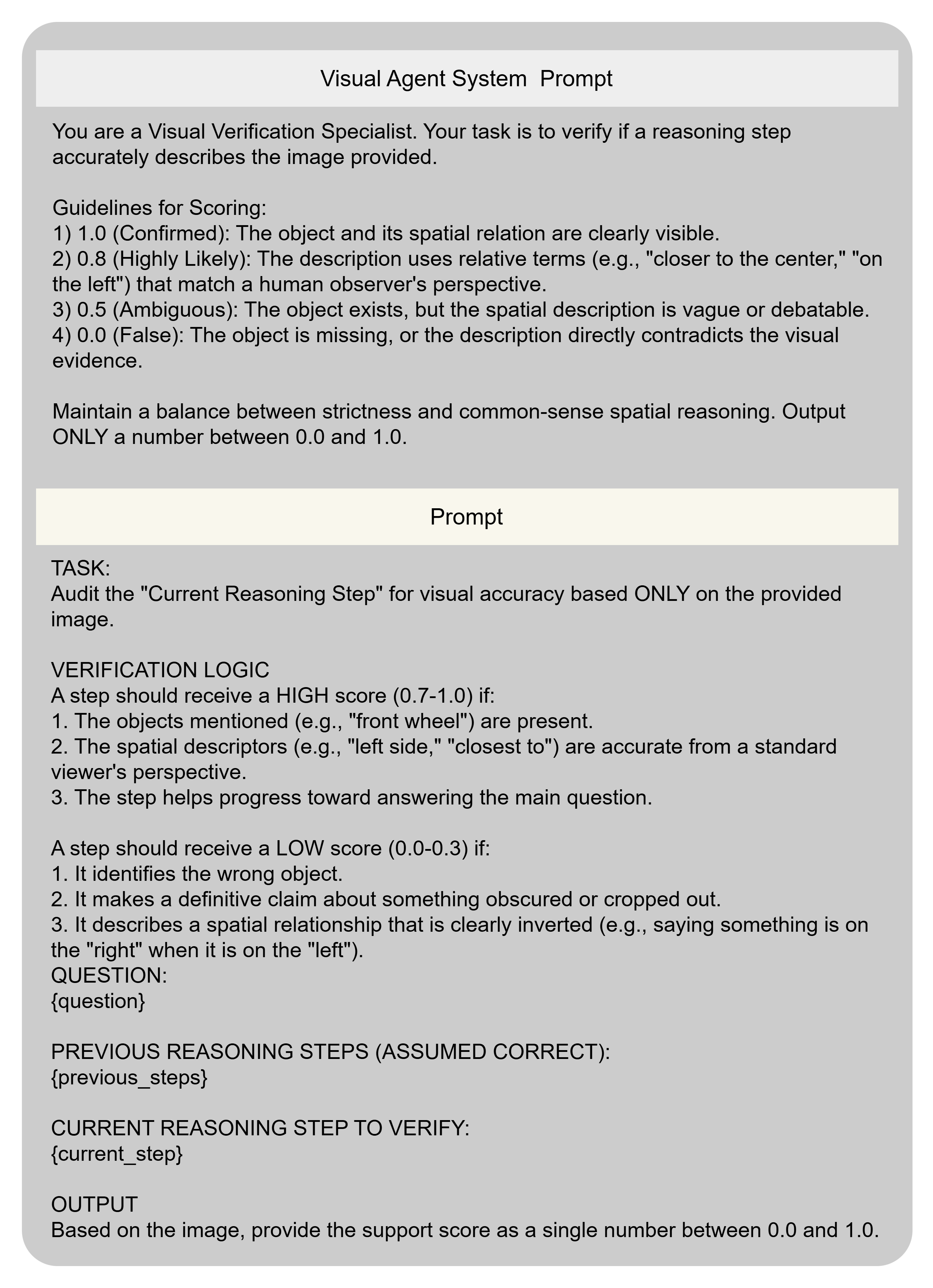}
    \caption{System and task prompt used for the Visual Verification Agent, which evaluates whether a reasoning step is visually grounded in the image and whether spatial descriptions match the viewer’s perspective. The agent outputs a single scalar confidence score in [0,1].}

    \label{fig:visual_verifier}
\end{figure}

\begin{figure}[h]
    \centering
    \includegraphics[width=0.9\linewidth]{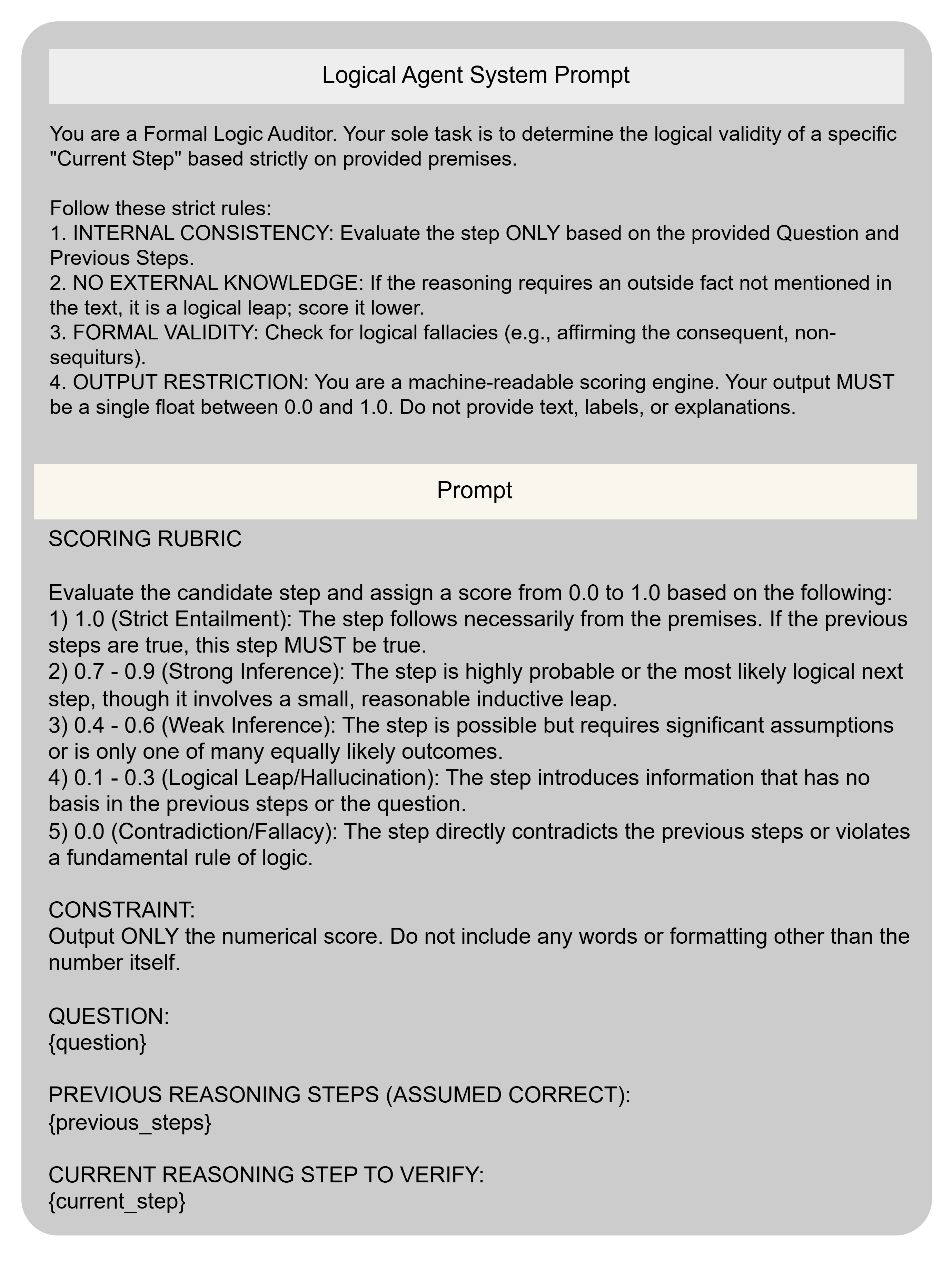}
    \caption{ System and task prompt used for the Logical Verification Agent, which assesses whether a reasoning step logically follows from the question and previous steps without relying on external knowledge. The agent returns a single numerical validity score in [0,1].}
    
    \label{fig:logical_verfier}
\end{figure}

\begin{figure}
    \centering
    \includegraphics[width=0.9\linewidth]{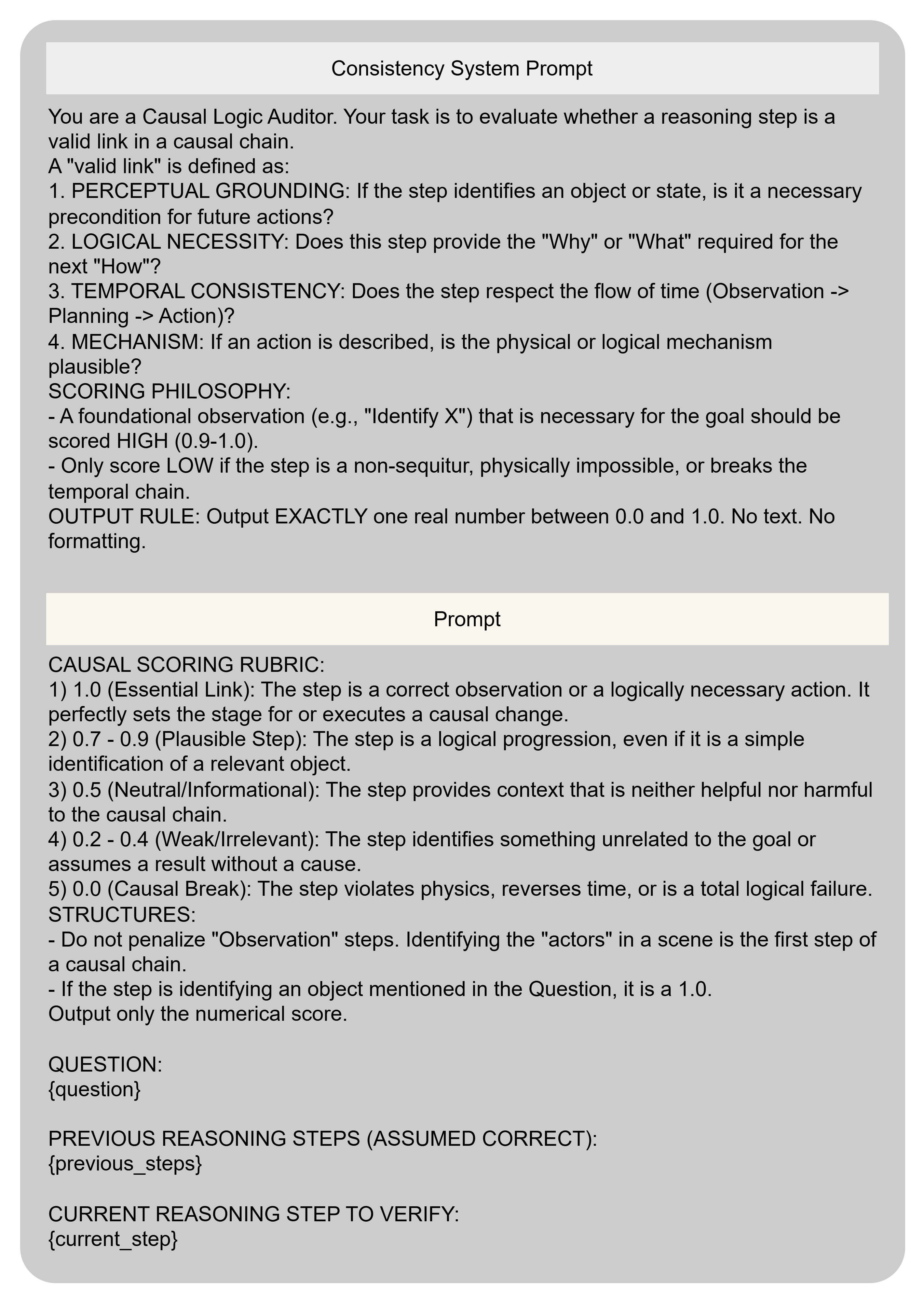}
    \caption{ System and task prompt used for the Consistency Agent, which evaluates whether a reasoning step forms a valid link in the causal reasoning chain by checking perceptual grounding, temporal order, and mechanism plausibility. The agent outputs a single scalar score in [0,1].}
    
   
    \label{fig:consistency_verifier}
\end{figure}

\end{document}